
\documentclass[letterpaper, 10 pt, conference]{ieeeconf}  

\IEEEoverridecommandlockouts                              

\overrideIEEEmargins                                      



\usepackage{url}
\usepackage{times}
\usepackage{multicol}
\usepackage[bookmarks=true]{hyperref}
\usepackage{graphics} 
\usepackage{epsfig} 
\usepackage{mathptmx} 
\usepackage{times} 
\usepackage{amsmath,amsfonts,bm}
\usepackage{amssymb}  
\usepackage{mdframed}
\usepackage{mathtools}
\usepackage{booktabs}
\usepackage[mathcal]{euscript}
\usepackage{esint}
\usepackage{amssymb}
\usepackage{tabularray}
\usepackage{cancel}
\usepackage{tikz}
\usepackage{rotating}
\usepackage{stackengine}

\tikzset{
	MyPersp/.style={scale=1.8,x={(-0.8cm,-0.4cm)},y={(0.8cm,-0.4cm)},
    z={(0cm,1cm)}},
	MyPoints/.style={fill=white,draw=black,thick}
		}


\newtheorem{theorem}{Theorem}
\newtheorem{corollary}{Corollary}[theorem]
\newtheorem{lemma}[theorem]{Lemma}
\newtheorem{remark}{Remark}

\newcommand\IncG[2][]{\addstackgap{%
  \raisebox{-.5\height}{\includegraphics[#1]{#2}}}}



\newcommand{\ambient}{\mathbb{R}^{3}}
\newcommand{\ambientvec}{\mathbb{R}^3}

\newcommand{\confspace}{\mathcal{C}}
\newcommand{\Rplus}{\mathbb{R}^+}
\newcommand{\funspace}{\mathcal{F}}

\newcommand{\gammareg}{A[\gamma]}
\newcommand{\gammavol}{V[\gamma]}

\newcommand{\tanb}{\operatorname{T}}
\newcommand{\qdot}{\dot{q}}
\newcommand{\gt}{\gamma_t}
\newcommand{\gtdot}{\dot{\gamma}_t}

\newcommand{\RT}{\mathbb{R}^3}

\newcommand{\MSVDI}{\textit{minimum swept volume distance} }

\newcommand{\MSVD}{minimum swept volume distance }
\newcommand{\MSV}{minimum swept volume }

\def\comment#1{{}}

\title{\LARGE \bf
A minimum swept-volume metric structure for configuration space
}


\author{
Yann de Mont-Marin$^{\text{*,a,b}}$, Jean Ponce$^{\text{b,c}}$ and Jean-Paul Laumond$^{\text{a,b}}$%
\thanks{$^{\text{*}}$Corresponding author}
\thanks{$^{\text{a}}$Inria}
\thanks{$^{\text{b}}$Département d'informatique de l'Ecole normale supérieure (ENS-PSL, CNRS, Inria)}%
\thanks{$^{\text{c}}$Center for Data Science, New York University}
}

\begin{document}
\maketitle
\thispagestyle{empty}
\pagestyle{empty}

\begin{abstract}
Borrowing elementary ideas from solid mechanics and differential
geometry, this presentation shows that the volume swept by a regular
solid undergoing a wide class of volume-preserving deformations induces
a rather natural metric structure with well-defined and computable
geodesics on its configuration space. This general result applies to
concrete classes of articulated objects such as robot manipulators, and
we demonstrate as a proof of concept the computation of geodesic paths
for a free flying rod and planar robotic arms as well as their use in path planning with many obstacles.

\end{abstract}

\section{Introduction}
\subsection{Context}
The concept of configuration space, first formalized in the robotics context by Lozano-Pérez \cite{LozanoPerez1983}, is a key element for any approach to motion planning \cite{Latombe91, Lav06}. In turn, developing effective algorithms for this task requires equipping the configuration space with an adequate metric structure, which is a key factor in developing the sampling and interpolation strategies that often are at the core of these algorithms.

Lavalle underlines in \cite{Lav06} the importance of the choice of distance and argues that it must capture within the same quantity different degrees of freedom of the system which have different units (for example an angle and a length) and that this issue can be fixed by considering a physically meaningful distance.
In this paper we postulate that the minimal volume swept by a robot between two configurations is a good candidate in presence of obstacles.
This follows Kuffner's suggestion in \cite{Kuffner2004} in the case of the motion of a solid. He writes:
\textit{Intuitively, an ideal metric for path planning in $SE(3)$ would correspond to a measure of the minimum swept volume [...]. Intuitively, minimizing the swept volume will minimize the chance of collision with  obstacles.}

We propose in this paper to formalize and extend this intuition to configuration spaces for regular bodies undergoing a wide class of volume-preserving deformations using tools from solid mechanics and differential geometry. We introduce the notion of \textit{minimum swept volume distance\footnote{Here the term swept volume refers to the measure of the swept region and not to the region itself. In addition, the notion of \textit{minimum swept volume distance} is new and is not to be confused with the notion of \textit{swept volume distance}}}
 as the minimum amount over all possible paths of volume to swept from one configuration to another.
First in a very broad set-theoretical setting and then in a differential setting where the configuration space is identified to a set of diffeomorphisms resembling the construction of Lin and Burdwick in the case of rigid bodies \cite{Lin97}. Under some mild assumptions, we prove that the \MSV is indeed a distance with well-defined and computable geodesics.
This is intuitively rather natural, but we are not aware of any previous statements of this result.

In addition to this theoretical contribution, our construction is an important step toward making the use of swept volumes in trajectory optimization \cite{Jallet21, Schulman2014} computationally realistic. Indeed, traditionally, one first computes the region swept along a given trajectory, which is very costly and then measures the volume of the region. This prevents the optimization of the trajectory with respect to the swept volume. In comparison, our construction approaches the swept volume as a sum of local contributions. This allows us to formulate the problem of computing the \MSV as an energy minimization problem and to use a state of the art second-order optimization solver \cite{Wright97}.

In our experiments, we compute some geodesics for different solids and for some planar robotic arms. We finally use the \MSVD and its geodesics for interpolation in the well-known motion planning algorithm RRT \cite{Lav06} and show with multiple challenging examples a significant reduction of the number of nodes explored.

The main contributions of the paper are:
\begin{itemize}
\item the construction of a \textit{local swept volume} operator,
\item proofs that the \MSV is a distance on the configuration space in a set-theoretical setting and in a  differential setting using the local operator,
\item a closed form for the operator in the case of a poly articulated system with rigid bodies, and
\item the numerical computation of the minimum swept volume distance, its geodesics, and their use in RRT.
\end{itemize}
\subsection{Related work}\label{sec:relatedwork}
\textbf{Swept volume.}
Several algorithms are available to compute the region of space, swept by a moving robot, using voxels \cite{Peternell2005, Yu2013} or its boundary using meshes \cite{Kim2004, Sellan2021}. Despite algorithmic progress both computations remain extremely expensive.
Blackmore and Leu discuss in \cite{Blackmore1992} an analytical formulation of the swept region that may eventually lead to an efficient swept volume formulation but since then, this approach does not appear to have been explored further.
Recent works such as \cite{Baxter2020, Markvorsen2016, Orthey2019} build heuristics around the swept volume to weight a Euclidean distance, unfortunately this approach tacitly assumes that the swept volume distance have a Euclidean structure, which is limiting.
More recently Chiang first computes the swept volume of some trajectories and uses neural networks to train an estimator of the swept volume \cite{Chiang2020}. Although this partially addresses the problem of computational cost, neither the quantity used, nor its estimate constitute a distance on the configuration space.

\textbf{Configuration space metric structure.}
Zhang et al. study in \cite{Zhang2007} the maximum vertex distance originally proposed by Lavalle in \cite{Lav06}. Although this distance is derived from a physical quantity and thus allows the homogenization of the different degrees of freedom, it may not be adapted to the problem of exploration in a constrained environment since the interaction of the system with the environment will depend on the displacement of all the points and not only the largest displacement. Zefran et al. extensively study Riemannian geometries on the configuration space of a solid and establish the properties necessary for a geometric structure to respect the fundamental principles of mechanics, but this study is restricted to the case of a configuration space isomorphic to $SE(3)$ and does not cover the swept volume case.

\section{Swept-volume distance construction}\label{sec:constru}
\begin{figure}[t!]
  \centerline{%
  \includegraphics[width=.45\textwidth]{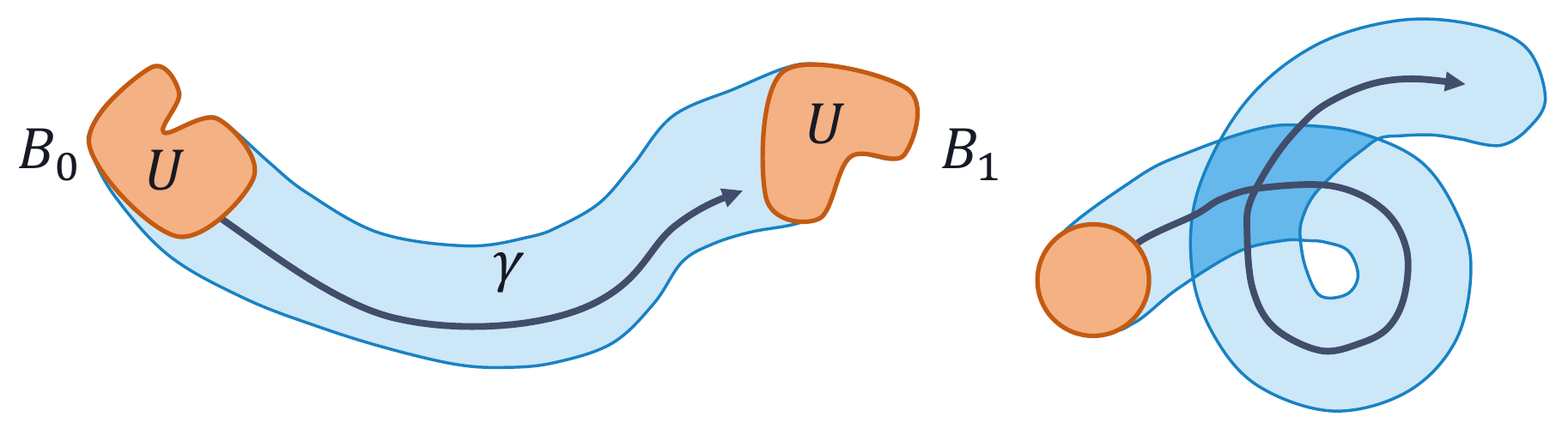}
  }
  \caption{\footnotesize \textbf{Left:} Set-theoretical notations for the volume swept by a body along a path $\gamma$. \textbf{Right:} Illustration of a region swept twice and counted one.}
\label{fig:set_setting}
\vspace{-.5em}
\end{figure}

\subsection{A set theoretical formulation}\label{sec:set}
Let us equip the set of all compacts of the affine Euclidean space $\mathbb{E}^3$ with the topology induced by the Hausdorff distance and define a regular body as a compact equal to the closure of its interior \cite{Milnor1965}. For the sake of simplicity, we identify $\mathbb{E}^3$ as well as the vector space of its translations with $\RT$, in the rest of the paper. Although this implies the choice of a coordinate system, all our construction is independent of that choice.

In this section we call configuration space, denote by $\confspace$, a set of regular bodies with constant volume $U > 0$. We require $\confspace$ to be path connected under the Hausdorff topology. Note that the largest possible configuration space formed by the set containing all the regular bodies of volume $U$ is itself path connected. Given a continuous path $\gamma : [0,1] \rightarrow \confspace$ we define the region swept along the motion $\gamma$ and the corresponding swept volume by:
\begin{align}
    \gammareg &= \bigcup_{t\in[0,1]}\gamma(t) \text{ and }
    \gammavol = V(\gammareg) - U,
\end{align}
where $V$ denotes the usual volume. Figure \ref{fig:set_setting} (left) illustrates the notations.
\begin{lemma}\label{thm:dist_set}
Given two configuration $B_0$ and $B_1$ in $\mathcal{C}$, let us denote by $\Gamma(B_0, B_1)$ the set of all continuous paths joining $B_0$ and $B_1$ in $\confspace$. The mapping $d : \confspace^2 \rightarrow \Rplus$ defined by
\begin{equation}
    d(B_0, B_1) = \inf_{\gamma \in \Gamma(B_0, B_1)}\gammavol\label{eq:dist1}
\end{equation}
is a distance on $\mathcal{C}$ and we call it the \MSVDI between $B_0$ and $B_1$.
\end{lemma}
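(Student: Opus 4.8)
The plan is to verify the three defining properties of a metric in turn: non-negativity together with the identity of indiscernibles, symmetry, and the triangle inequality. Non-negativity is immediate since for any path $\gamma \in \Gamma(B_0,B_1)$ the region $\gammareg$ contains $\gamma(0) = B_0$, which has volume $U$, so $V(\gammareg) \geq U$ and hence $\gammavol \geq 0$; taking the infimum preserves this. For the identity of indiscernibles, if $B_0 = B_1$ the constant path has swept region exactly $B_0$, giving $\gammavol = 0$ and thus $d(B_0,B_1) = 0$. The converse — that $d(B_0,B_1) = 0$ forces $B_0 = B_1$ — is the delicate direction and I will return to it below. Symmetry follows by reparametrization: the map $t \mapsto \gamma(1-t)$ is a bijection of $\Gamma(B_0,B_1)$ onto $\Gamma(B_1,B_0)$ that leaves the swept region, and therefore $\gammavol$, unchanged, so the two infima coincide.

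For the triangle inequality, fix $B_0, B_1, B_2$ and take arbitrary paths $\gamma_1 \in \Gamma(B_0,B_1)$ and $\gamma_2 \in \Gamma(B_1,B_2)$. Their concatenation $\gamma$ (traverse $\gamma_1$ on $[0,\tfrac12]$, then $\gamma_2$ on $[\tfrac12,1]$, suitably reparametrized) lies in $\Gamma(B_0,B_2)$ and its swept region is $A[\gamma_1] \cup A[\gamma_2]$. Therefore $V(A[\gamma]) = V(A[\gamma_1] \cup A[\gamma_2]) \leq V(A[\gamma_1]) + V(A[\gamma_2]) - V(A[\gamma_1]\cap A[\gamma_2]) \leq V(A[\gamma_1]) + V(A[\gamma_2]) - U$, where the last step uses that $A[\gamma_1] \cap A[\gamma_2] \supseteq B_1$ so its volume is at least $U$. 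Subtracting $U$ from both sides gives $A[\gamma] \leq A[\gamma_1] + A[\gamma_2]$, and taking the infimum over $\gamma_1$ and $\gamma_2$ yields $d(B_0,B_2) \leq d(B_0,B_1) + d(B_1,B_2)$. This step I expect to be routine once the bookkeeping on the overlap volume is set up correctly.

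The main obstacle is the separation axiom: showing that $d(B_0,B_1) = 0$ implies $B_0 = B_1$. Here one must exploit that $B_0$ and $B_1$ are regular bodies (closures of their interiors) of equal volume $U$. Suppose $B_0 \neq B_1$; then, by regularity, one of them — say $B_1$ — has an interior point $x$ not in the other, so a small closed ball $\bar B(x,r)$ of positive volume $\epsilon$ is contained in $B_1 \setminus B_0$. For any path $\gamma$ from $B_0$ to $B_1$, the swept region $\gammareg$ contains both $B_0$ and $B_1$, hence contains $B_0 \cup \bar B(x,r)$, a set that is essentially disjoint union up to measure zero, so $V(\gammareg) \geq V(B_0) + \epsilon = U + \epsilon$. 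Thus $\gammavol \geq \epsilon > 0$ for every $\gamma$, and the infimum $d(B_0,B_1) \geq \epsilon > 0$, a contradiction. The one point requiring care is that the ``extra'' ball genuinely adds positive volume disjointly from $B_0$, which is exactly where the hypothesis that $B_0$ is a regular body (so that $B_1 \setminus B_0$, if nonempty, contains a nonempty open set) is used; without regularity a spurious lower-dimensional difference could be invisible to the volume. I would also note in passing that $V$ being a well-defined finite volume on the swept region requires $\gammareg$ to be measurable, which follows from compactness of $\confspace$'s elements and continuity of $\gamma$ under the Hausdorff topology.
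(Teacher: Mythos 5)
Your proof is correct and follows essentially the same route as the paper's: the same concatenation plus inclusion--exclusion argument for the triangle inequality (bounding $V(A[\gamma_1]\cap A[\gamma_2])$ below by $U$ via $B_1$), and the same ``open ball in the set difference'' argument for separation, for which you actually justify the use of regularity more explicitly than the paper does. The only nits are that you omit the (trivial) remark that $\Gamma(B_0,B_1)\neq\emptyset$ because $\confspace$ is path connected, and the line ``$A[\gamma]\le A[\gamma_1]+A[\gamma_2]$'' should read $V[\gamma]\le V[\gamma_1]+V[\gamma_2]$.
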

\begin{proof}
  $d$ is well defined on $\confspace^2$, indeed $\Gamma(B_0, B_1)$ is never empty because $\mathcal{C}$ is path connected.
  Given any configurations $B_0$ and $B_1$, we have $d(B_0,B_1)\ge 0$
  since for any $\gamma$ in $\Gamma(B_0,B_1)$, $B_0$ (and of course
  $B_1$) is a subset of $\gammareg$ with volume $U$.
  The function $d$
  is symmetric by definition (any path joining $B_0$ and $B_1$ also
  joins $B_1$ and $B_0$), and it satisfies the triangular inequality
  because given any path $\gamma_{01}$ joining $B_0$ to $B_1$ and any
  path $\gamma_{12}$ joining $B_1$ and $B_2$, the path $\gamma_{02}$
  obtained by concatenating $\gamma_{01}$ and $\gamma_{12}$ in $B_1$
  joins $B_0$ and $B_2$, with
  $V[\gamma_{02}] =
  V(A[\gamma_{01}]\cup A[\gamma_{12}]) - U
  \le V(A[\gamma_{01}])+V(A[\gamma_{12}])-2U \le V[\gamma_{01}]+V[\gamma_{12}]$ since
  $B_2\subset A[\gamma_{12}]\cap A[\gamma_{23}]$ with volume $U$.  Now suppose
   $B_1\neq B_0$. We can always find an open ball which is in $B_0 \backslash B_1$ (by swapping $B_0$ and $B_1$ if needed), so we have: $V(B_0\cup B_1) = V(B_1) + V(B_0 \backslash B_1) > U$. For any path $\gamma$ joining
  $B_0$ to $B_1$ we have
  $\gammavol=V(\gammareg) - U\ge V(B_0\cup B_1) - U > 0$.
  Trivially, we have $d(B_0,B_0) = 0$ because a path $\gamma_0$ such that
  $\gamma_0(t) = B_0$ for any $t$ in an open interval verify $V[\gamma_0]= V(B_0) - U=0$.
\end{proof}
The \MSVD $d$ is generally not a geodesic distance. The length of a geodesic must be the sum of the lengths of its two half sub-paths, but the swept volume $V$ count a region visited $n$ times only once.
Then for a candidate geodesic the sum of the swept volumes for two sub-paths visiting the same region may not be equal to the swept volume of the complete path. We illustrate the phenomenon in  Figure \ref{fig:set_setting} (right)

In robotics we need to be able to build geodesics to interpolate between configurations. To obtain a distance with geodesics we now use a differential setting and construct the distance as an integral of local contributions.

\subsection{A differential formulation}\label{sec:diff}
\begin{figure}[t!]
  \centerline{%
  \includegraphics[width=.5\textwidth]{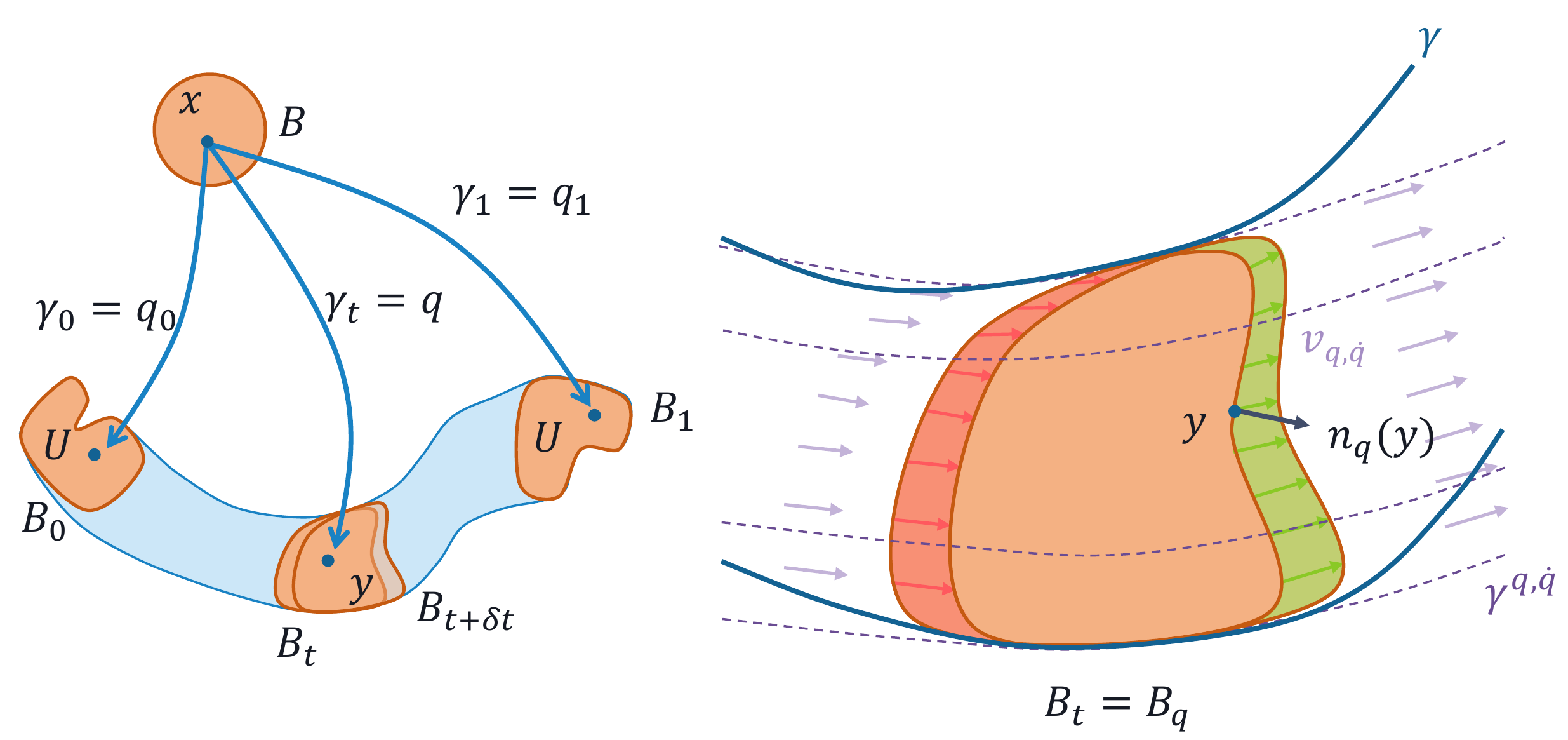}
  }
  \caption{\footnotesize \textbf{Left:} Notations of the differential point of view  of the volume swept by a body along a path. \textbf{Right:} Highlight of the local contribution at time  $t$, in green the outgoing flux, in red the incoming flux. We also represent in purple the flow of the static velocity field $v_{q,\qdot}$.}
\label{fig:diff_setting}
\vspace{-.5em}
\end{figure}
Using a slightly different point of view, borrowed from solid mechanics \cite{Arnold1989}, we now consider a regular solid of reference $B$  in $\RT$ whose boundary $\partial B$ is a smooth orientable surface without boundary.
We consider a space $\funspace$ of diffeomorphisms of $\ambient$ to allow for deformations of the reference body $B$.
We require that $\funspace$ be a path connected differentiable manifold with respect to the strong functional topology, and that its elements be isochoric, i.e., preserve volume and orientation in $\RT$ \cite{Arnold1989}. Thus, for all $q$ in $\funspace$, $q$ is a diffeomorphism of $\ambient$ and $q(B)$ is a regular body with the same volume as $B$. We denote by $\mathcal{E}$ the mapping from $\funspace$ to $\confspace$ defined by $q\mapsto B_q$ and we make a first hypothesis \textbf{(H1)}: \textit{The embedding function $\mathcal{E}$ is injective.} With \textbf{(H1)}, $\mathcal{E}$ is a bijection between $\funspace$ and $\mathcal{C} = \mathcal{E}(\funspace)$ so we can identify the diffeomorphism $q$ and the regular body $B_q=\mathcal{E}(q)=q(B)$.

A differentiable path $\gamma$ in $\funspace$ is an $\funspace$-isotopy of the ambient space, i.e., a smooth map $I\times \ambient \rightarrow \ambient$ where $I$ is some interval of the real line such that for any time $t$, the mapping $\gamma_t: x\mapsto \gamma(t,x)$ is an element of $\funspace$. We can define its time derivative $\dot{\gamma}_t:\ambient \rightarrow \ambientvec$ for $x$ in $\ambient$ as $\dot{\gamma}_t(x) = \frac{\partial}{\partial t} \gamma(t,x)$.
For each $t$, the mapping $\dot{\gamma}_t$ belongs to the tangent space $\tanb_{\gamma_t}\funspace$.
Regardless of any path, a configuration $q$ in $\funspace$ and an element $\qdot$ in $\tanb_{q}\funspace$ induces a velocity field on $\ambient$ defined as $v_{q, \qdot}(y) = \qdot \circ q^{-1}(y)$.
Note that we use the variable name $x$ to describe a point of the reference body $B$ in $\RT$ before the transformation and $y=q(x)$ for a point of $B_q$. We illustrate the notation in Figure \ref{fig:diff_setting} (left). Note also that since the elements of $\funspace$ are isochoric diffeomorphisms, the vector field $v_{q,\qdot}$ has zero divergence\footnote{It is a direct consequence of the transport theorem.} \cite{Arnold1989}. We can now define the local swept-volume as the flux of the velocity vector field through the boundary $\partial B_q$ of $B_q$:
\begin{equation}
F(q, \qdot) = \int_{\partial B_q} \operatorname{max}(0, <v_{q,\qdot}(y),n_{q}(y)>) dA(y),\label{eq:finsler}
\end{equation}
where $n_{q}(y)$ is the unit normal vector pointing outward at point $y$ of the boundary $\partial B_q$. Note that taking the max ensures we only consider the outgoing flux since incoming flux does not contribute to the
swept volume as illustrated in Figure \ref{fig:diff_setting} (right).
For a path $\gamma$ the additional volume swept by $B_{\gt}$ between $t$ and $t + \delta t$ is $F(\gt,\gtdot)\delta t$.
We can now define the flux-based swept volume associated with $\gamma$ as the sum of all local contribution:
\begin{equation}
    V'[\gamma] = \int_0^1 F(\gamma_t, \dot{ \gamma}_t) dt.\label{eq:vol_prime}
\end{equation}
Note that unlike the ordinary swept volume $V[\gamma]$, which counts the contribution of a region of space visited $n$ times only once, $V'[\gamma]$ counts it $n$ times. When no region is visited more than once,  $V'[\gamma] = V[\gamma]$.

\begin{figure*}[htp]
\centering
\setlength\tabcolsep{1.5pt} 
\renewcommand{\arraystretch}{0.2} 
\begin{tabular}{p{.15\textwidth}p{.15\textwidth}p{.15\textwidth}p{.15\textwidth}p{.15\textwidth}p{.15\textwidth}}
\IncG[width=.15\textwidth,height=.15\textwidth]{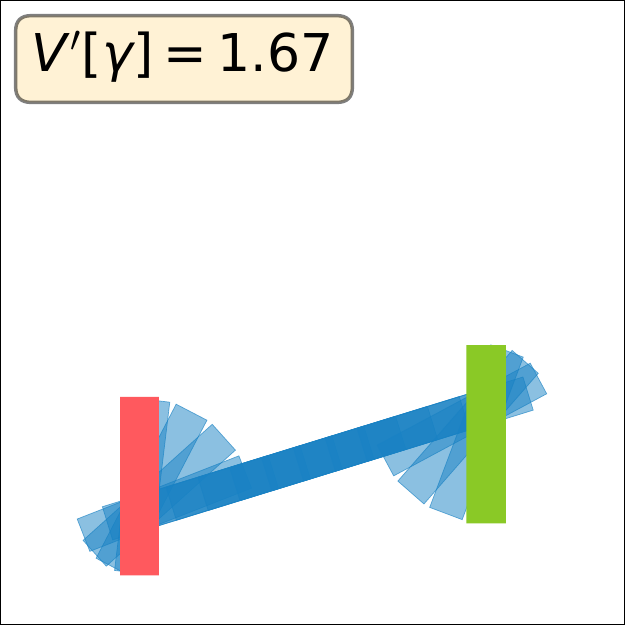}
&\IncG[width=.15\textwidth,height=.15\textwidth]{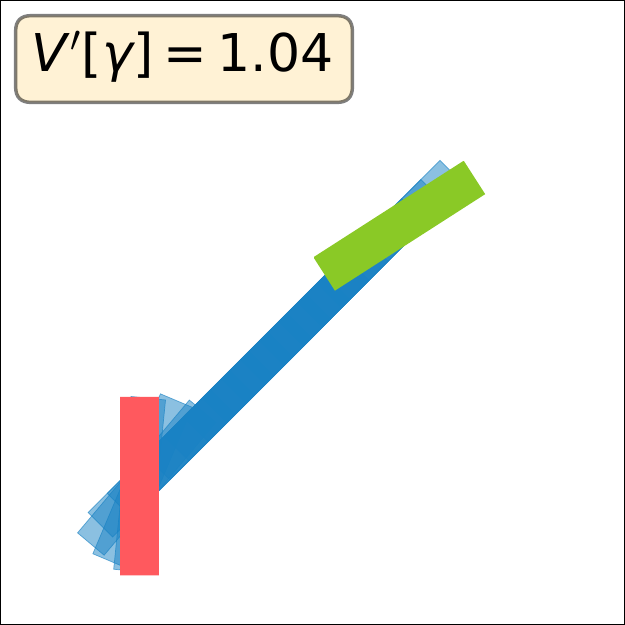}
&\IncG[width=.15\textwidth,height=.15\textwidth]{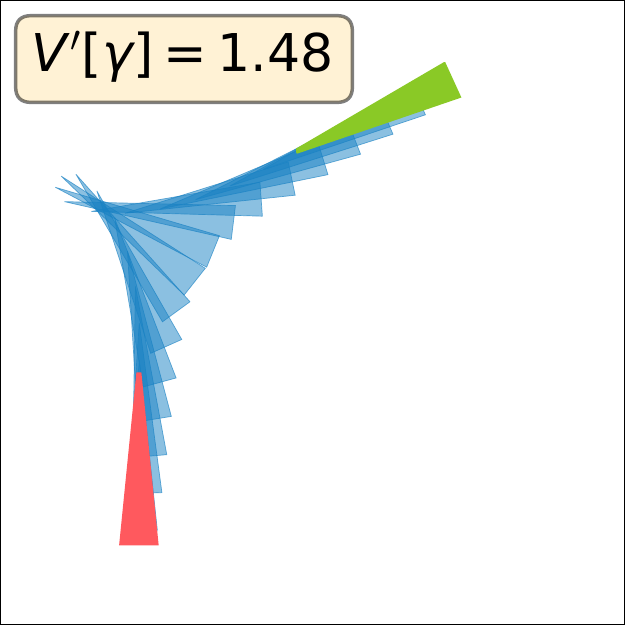}
&\IncG[width=.15\textwidth,height=.15\textwidth]{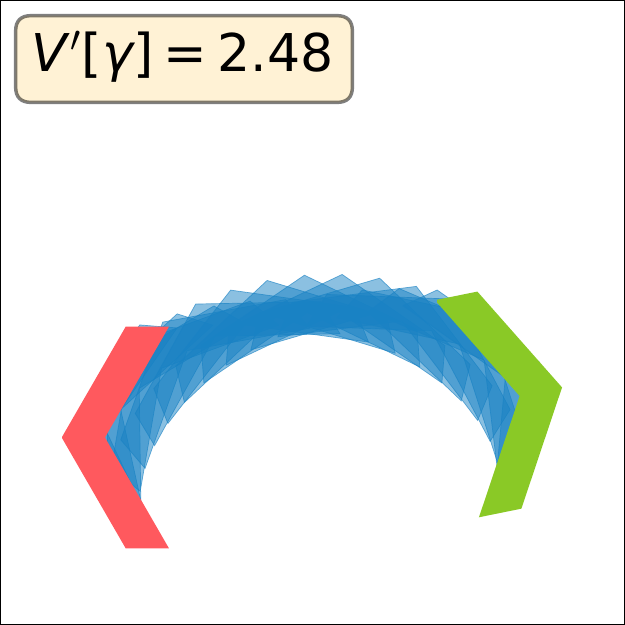}
&\IncG[width=.15\textwidth,height=.15\textwidth]{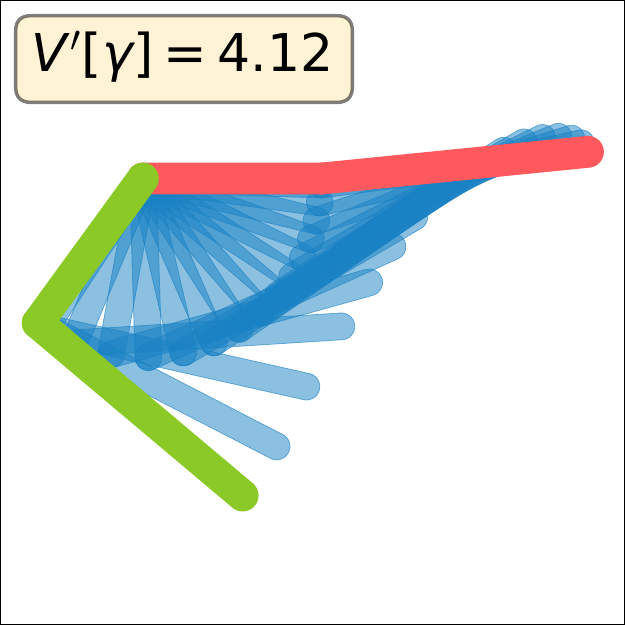}
&\IncG[width=.15\textwidth,height=.15\textwidth]{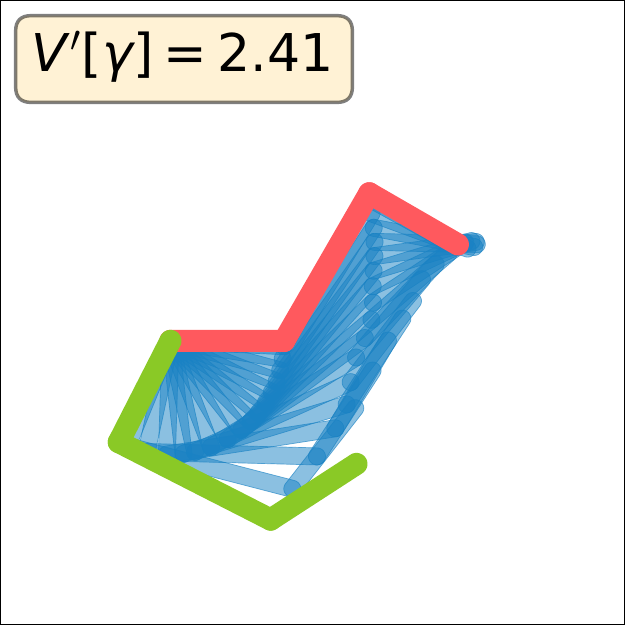}\\
\IncG[width=.15\textwidth,height=.15\textwidth]{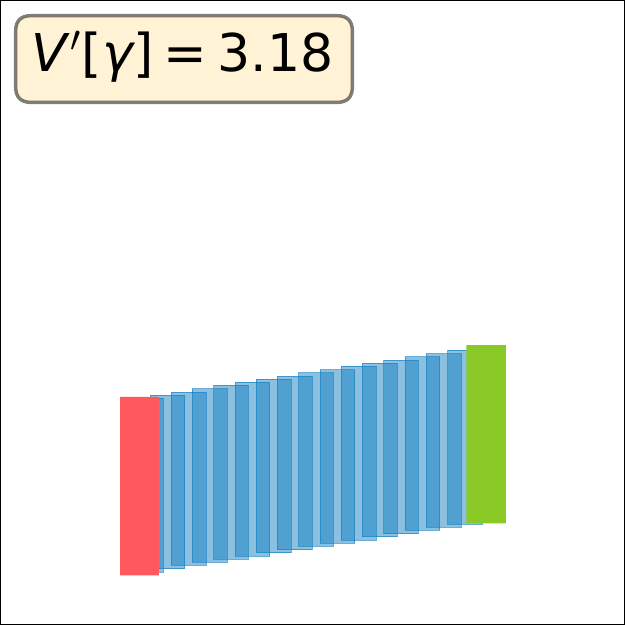}
&\IncG[width=.15\textwidth,height=.15\textwidth]{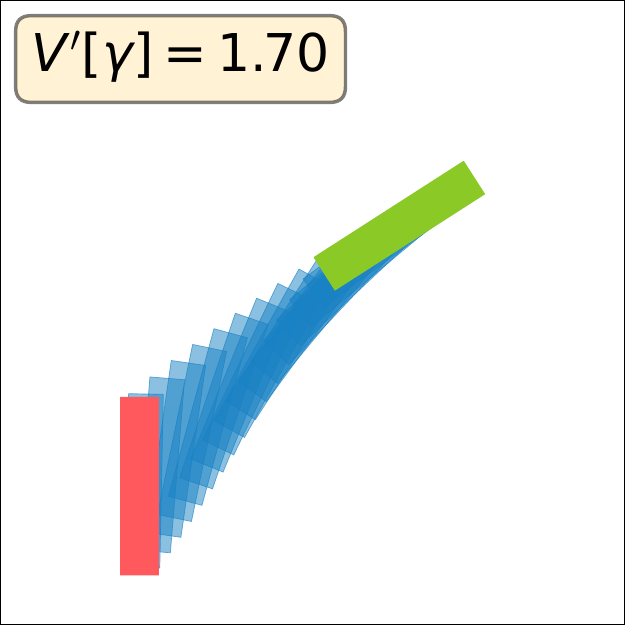}
&\IncG[width=.15\textwidth,height=.15\textwidth]{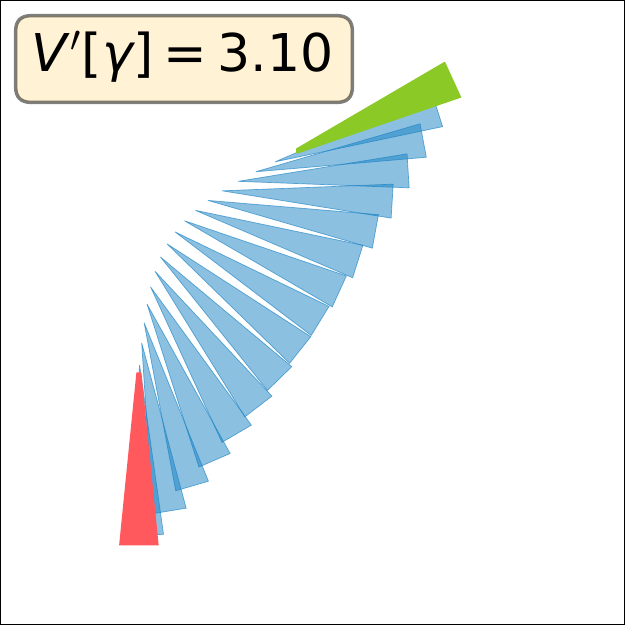}
&\IncG[width=.15\textwidth,height=.15\textwidth]{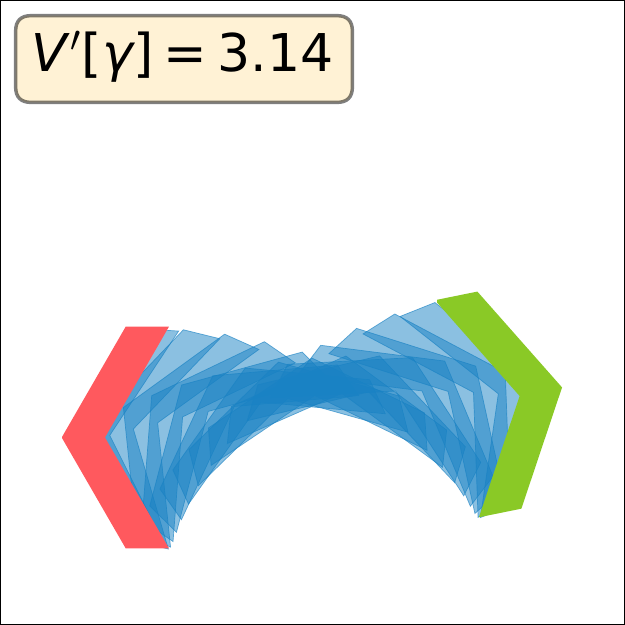}
&\IncG[width=.15\textwidth,height=.15\textwidth]{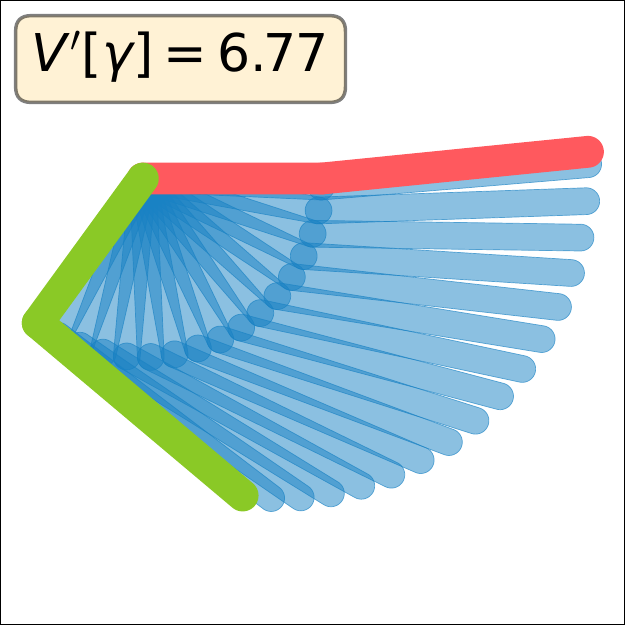}
&\IncG[width=.15\textwidth,height=.15\textwidth]{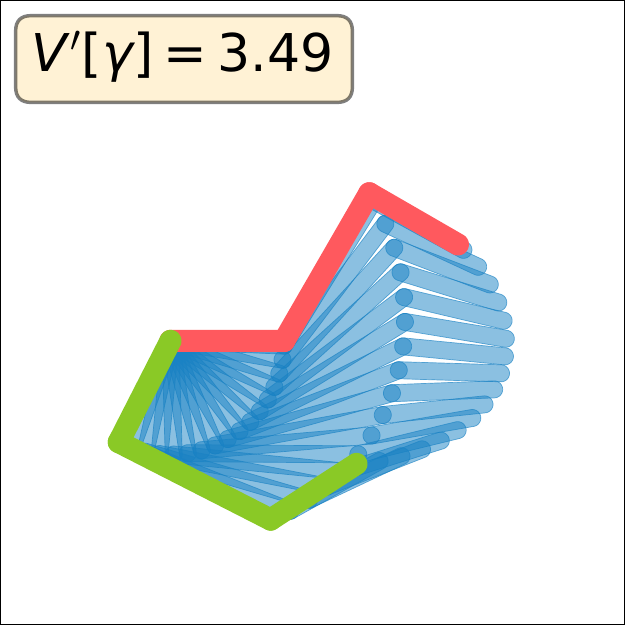}\\
\end{tabular}
  \caption{\footnotesize Geodesics for different systems and different pairs of configurations using the \MSV metric structure (top row) and the canonical Riemannian one (bottom row). From left to right: a rod (x2), an arrow-head shape, a bracket shape, a double pendulum, and a triple pendulum.}
  \label{fig:geodesic}
\vspace{-.5em}
\end{figure*}

In Eq. \eqref{eq:vol_prime}, $F$ can be interpreted as a Lagrangian on $\funspace$ and the flux-base swept volume $V'[\gamma]$ as the action of $F$ along $\gamma$. We can define the least action of $F$ between any pair of configurations $B_0$ and $B_1$ as
\begin{equation}
    d'(B_0, B_1) = \inf_{\gamma\in \Gamma^1(B_0, B_1)} V'[\gamma]\label{eq:dist_prime},
\end{equation}
where $\Gamma^1(B_0, B_1)$ is the set of differentiable paths in $\funspace$ such that $\mathcal{E}(\gamma_0)=\gamma_0(B)=B_0$ and $\mathcal{E}(\gamma_1)=\gamma_1(B)=B_1$.

In the remaining part of this subsection, we prove that $d'$ as in \eqref{eq:dist_prime} defines a distance on and we discuss the existence of its geodesics.
The key point to prove is that for all $q$ in $\funspace$ the mapping $F_q:\dot{q} \mapsto F(q,\dot{q})$ defines a norm on the vector space $\tanb_q\funspace$, which makes $F$ a reversible Finslerian geometry \cite{Lang1999} and in turn ensures that $d'$ is a distance.
In this context, the existence of a geodesic between any pair of configurations is given by the well-known Hopf-Rinow theorem that we will detail in our case.
The study of the differential regularity of the operator is out of the scope of this article.

For a path $\gamma$ in $\funspace$ the velocity field $v_{\gt, \gtdot}$ is not static in general.
If the velocity field is static, the path is exactly the flow induced by the ordinary differential equation (ODE) associated to this static field.
Moreover, for an element $q$ and an element $\dot{q}$ of the tangent $\tanb_q\funspace$, we can uniquely create a path $\gamma^{q,\qdot}$ solution of the ODE associated to the static velocity field $v_{q,\qdot}$ with the initial conditions given by $\gamma^{q,\qdot}_0=q$ and $\dot{\gamma}^{q,\qdot}_0=\qdot$. $\gamma^{q,\qdot}$ is the unique path passing through $q$ with derivative $\qdot$ which induce a static vector field equal to $v_{q,\qdot}$. The vector field $v_{q,\qdot}$ and the path $\gamma^{q,\qdot}$ are illustrated in Figure \ref{fig:diff_setting} (right).
We now need a second hypothesis: these paths remain at least locally in $\funspace$:
\begin{equation*}
\text{\textbf{(H2)} : $\forall q, \dot{q}, \exists\epsilon>0,\forall t \in (-\epsilon,+\epsilon),\gamma^{q,\qdot}_t$ is an element of $\funspace$.}
\end{equation*}
\begin{theorem}\label{thm:norm_diff}
Under the hypotheses \textbf{(H1)} and \textbf{(H2)} the mapping:
\begin{equation}
F_q:\dot{q}\mapsto \int_{\partial B_q} \operatorname{max}(0, <v_{q, \dot{q}}(x),n_{q}(x)>) dA(x)\label{eq:norm}
\end{equation}
defines a norm on $\tanb_q \funspace$.
\end{theorem}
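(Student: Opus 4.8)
The plan is to check, for a fixed $q\in\funspace$, the three defining properties of a norm on the vector space $\tanb_q\funspace$ — absolute homogeneity, subadditivity, and positive definiteness — exploiting throughout two elementary facts: that $\qdot\mapsto v_{q,\qdot}=\qdot\circ q^{-1}$ is linear, and that $v_{q,\qdot}$ is divergence-free (noted in the text as a consequence of isochoricity). Finiteness and nonnegativity of $F_q$ are immediate, since $\partial B_q=q(\partial B)$ is a compact smooth surface and $v_{q,\qdot}$ is smooth, so the integrand is bounded and $F_q(\qdot)\in[0,\infty)$. For homogeneity, linearity of $\qdot\mapsto v_{q,\qdot}$ together with $\max(0,\lambda a)=\lambda\max(0,a)$ for $\lambda\ge 0$ gives $F_q(\lambda\qdot)=\lambda F_q(\qdot)$ at once when $\lambda\ge 0$; the case $\lambda<0$ reduces to reversibility $F_q(-\qdot)=F_q(\qdot)$, and this is exactly where zero divergence is used. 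Writing $a^+=\max(0,a)$, $a^-=\max(0,-a)$, the divergence theorem on $B_q$ gives $\int_{\partial B_q}\langle v_{q,\qdot},n_q\rangle\,dA=\int_{B_q}\operatorname{div} v_{q,\qdot}\,dV=0$, hence $\int_{\partial B_q}\langle v_{q,\qdot},n_q\rangle^+\,dA=\int_{\partial B_q}\langle v_{q,\qdot},n_q\rangle^-\,dA=F_q(\qdot)$, and the left-hand side is precisely $F_q(-\qdot)$. Subadditivity $F_q(\qdot_1+\qdot_2)\le F_q(\qdot_1)+F_q(\qdot_2)$ then follows from linearity of $v$ in $\qdot$ and the pointwise inequality $\max(0,a+b)\le\max(0,a)+\max(0,b)$ integrated over $\partial B_q$.

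The substantive point is positive definiteness: $F_q(\qdot)=0\Rightarrow\qdot=0$. If $F_q(\qdot)=0$, the nonnegative continuous integrand vanishes identically, so $\langle v_{q,\qdot}(x),n_q(x)\rangle\le 0$ for every $x\in\partial B_q$; combined with $\int_{\partial B_q}\langle v_{q,\qdot},n_q\rangle\,dA=0$ from the divergence theorem, this forces $\langle v_{q,\qdot},n_q\rangle\equiv 0$ on $\partial B_q$, i.e.\ $v_{q,\qdot}$ is everywhere tangent to $\partial B_q$. Therefore the flow $\phi^{q,\qdot}_t$ of the static field $v_{q,\qdot}$ maps $\partial B_q$ into itself, hence onto itself (it is a diffeomorphism, as is $\phi^{q,\qdot}_{-t}$), and consequently maps the region $B_q$ bounded by $\partial B_q$ onto $B_q$. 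Since $\gamma^{q,\qdot}_t=\phi^{q,\qdot}_t\circ q$, this means $\gamma^{q,\qdot}_t(B)=\phi^{q,\qdot}_t(B_q)=B_q$ for all sufficiently small $t$. By \textbf{(H2)} we may choose $\epsilon>0$ with $\gamma^{q,\qdot}_t\in\funspace$ for $t\in(-\epsilon,\epsilon)$, so $\emb(\gamma^{q,\qdot}_t)=B_q=\emb(q)$ there; by \textbf{(H1)} $\emb$ is injective, whence $\gamma^{q,\qdot}_t=q$ on $(-\epsilon,\epsilon)$, and differentiating at $t=0$ yields $\qdot=\dot\gamma^{q,\qdot}_0=0$.

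The main obstacle is this last step: the condition $F_q(\qdot)=0$ only says something about the boundary flux of $v_{q,\qdot}$, and turning ``$v_{q,\qdot}$ tangent to $\partial B_q$'' into ``$\qdot=0$'' genuinely requires leaving the infinitesimal picture — one passes to the flow $\gamma^{q,\qdot}$, observes that it fixes the set $B_q$, and then invokes both hypotheses \textbf{(H1)} (injectivity of $\emb$, so fixing the set means fixing the diffeomorphism) and \textbf{(H2)} (so that $\gamma^{q,\qdot}$ stays in $\funspace$ and the identification applies). Everything else — finiteness, homogeneity, reversibility via the divergence theorem, and the triangle inequality — is routine.
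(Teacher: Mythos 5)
Your proof is correct and follows essentially the same route as the paper: the routine properties come from linearity of $\qdot\mapsto v_{q,\qdot}$ together with the divergence theorem (the paper packages this as the rewriting $F_q(\qdot)=\tfrac12\int_{\partial B_q}|\langle v_{q,\qdot},n_q\rangle|\,dA$, which is exactly your reversibility identity), and positive definiteness is obtained in both cases by passing from tangency of $v_{q,\qdot}$ to $\partial B_q$ to the flow $\gamma^{q,\qdot}$ preserving $B_q$, then invoking \textbf{(H2)} and the injectivity of $\emb$ from \textbf{(H1)}. No substantive difference.
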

\begin{proof}
Let us consider some fixed $q$ in $\funspace$.
Using Stokes' theorem, the fact that $v_{q, \dot{q}}$ has a null divergence, that $max(0,C)$ is the positive part of a real $C$ and that $\partial B_q = q(\partial B)$ we obtain:
\begin{equation}
F_q(\dot{q}) = \frac{1}{2} \int_{q(\partial B)} |<v_{q,\dot{q}}(x),n_q(x)>| dA(x). \label{eq:avecvabs}   
\end{equation}
From this formulation it follows easily from the linearity of the integral and the inner product as well as from the norm properties of the absolute value that $F_q$ is positive $F_q(\dot{q})\geq 0$, homogeneous $F_q(\lambda\dot{q})=|\lambda|F_q(\dot{q})$  and sub-additive $F_q(\dot{q} + \dot{q}') \leq F_q(\dot{q}) + F_q(\dot{q}')$ for any $\dot{q},\dot{q}'$ in $\tanb_q\funspace$ and $\lambda$ in $\mathbb{R}$.

Now, let us consider $\qdot$ such that $F_q(\dot{q})=0$.
We use \textbf{(H2)} to construct the path $\gamma^{q,\qdot}$.
Since the integrand in Eq. \eqref{eq:norm} is always positive, $F_q(\dot{q})=0$ implies  that $<v_{q,\qdot}(y),n_q(y)>=0$ for every $y$ in $\partial B_q$
Recall that $\partial B_q$ is a compact surface without boundary because it is isomorphic to $\partial B$. If a vector field is everywhere tangent to a compact surface without boundary, its integral curves passing by a point on this surface stay on the surface.
But $\gamma^{q,\qdot}$ is exactly the flow of $v_{q,\qdot}$ so we have that $\gamma^{q,\qdot}_t(y)$ is in $\partial B_q$ for every $y$ in $\partial B_q$ and $t$ in $(-\epsilon, +\epsilon)$.

We have $\gamma^{q,\qdot}_t(\partial B) \subset q(\partial B)$ and in turn $\mathcal{E}(\gamma^{q,\qdot}_t) = \mathcal{E}(q)
$ because $\partial B$ is a compact orientable surface without boundary
for all $t$ in $(-\epsilon, +\epsilon)$.
Using $\mathcal{E}$ injectivity from \textbf{(H1)} we have $\gamma^{q,\qdot}_t = q$ which makes $\gamma^{q,\qdot}$ a constant path on $(-\epsilon, +\epsilon)$
and in turn $\qdot = \dot{\gamma}^{q,\qdot}_0 = 0$, which concludes the proof.
\end{proof}
\begin{remark}
Both \textbf{(H1)} and \textbf{(H2)} are necessary, and we can construct a counter example when either \textbf{(H1)} or \textbf{(H2)} is not satisfied.
\end{remark}
When \textbf{(H1)} and \textbf{(H2)} and thus Theorem \ref{thm:norm_diff} holds, $F$ is a reversible Finslerian geometry
and $d'$ as defined in Eq. \eqref{eq:dist_prime} is  automatically a distance on the configuration space.
To certify the existence of $d'$ geodesics between all pairs of configurations, we rely on topological properties of the space $\funspace$ equipped with $d'$ and the well-known Hopf-Rinow \cite{Bridson1999} theorem. In our case it can be written as follows:
\begin{theorem}[Hopf-Rinow]\label{thm:hopfrinow}
If \textbf{(H1)} and \textbf{(H2)} hold and $\funspace$ equipped with the distance $d'$ is locally compact, complete, and path connected, then any pair of configurations can be joined by a geodesic.
\end{theorem}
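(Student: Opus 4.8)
The plan is to deduce this statement from the metric-space version of the Hopf--Rinow theorem (often attributed to Cohn--Vossen), which applies to length spaces rather than only to Riemannian manifolds; the real work is checking that $(\funspace, d')$ fits that framework, since once it does the conclusion is immediate.

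First I would make precise that $d'$ is the intrinsic (length) metric associated to the reversible Finsler structure $F$. By Theorem~\ref{thm:norm_diff}, $F_q$ is a norm on $\tanb_q\funspace$ for every $q$; it depends continuously on $q$ (the integrand in~\eqref{eq:norm} varies continuously and the domain $\partial B_q = q(\partial B)$ varies smoothly with $q$), so $\gamma \mapsto V'[\gamma]=\int_0^1 F(\gamma_t,\dot\gamma_t)\,dt$ is the standard length functional of a Finsler length structure on the manifold $\funspace$, and Eq.~\eqref{eq:dist_prime} defines $d'$ as exactly the induced intrinsic metric: $d'(B_0,B_1)$ is the infimum of the lengths of the differentiable, hence rectifiable, paths joining the corresponding diffeomorphisms. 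I would also note that, because $\funspace$ is a differentiable manifold and $F$ is continuous and positive on the unit sphere bundle, in any compact coordinate chart $F$ is squeezed between two constant Euclidean norms; this shows that $d'$ is finite, induces a genuine length-space structure, and (using \textbf{(H1)}) is a metric, consistently with the remark preceding Theorem~\ref{thm:hopfrinow}.

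Next I would invoke the metric Hopf--Rinow theorem (see \cite{Bridson1999}, Chapter~I.3): a length space that is complete and locally compact is proper --- every closed $d'$-bounded subset is compact --- and is a geodesic space, i.e.\ any two of its points are joined by a length-minimizing path. The hypotheses are met verbatim: $(\funspace,d')$ is assumed locally compact and complete, and path-connectedness implies connectedness, so the theorem applies and yields, for any two diffeomorphisms in $\funspace$, a minimizer of $V'$ between them. Transporting this through the bijection $\mathcal{E}$ of \textbf{(H1)}, for any $B_0,B_1\in\confspace$ there is a path realizing $d'(B_0,B_1)$, which is the desired geodesic.

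The main obstacle is the regularity bookkeeping hidden in the first step: one must ensure that the topology defined by $d'$ is the topology in which local compactness and completeness are assumed, and that a ``geodesic'' in the Finsler sense (a minimizer of $V'$ among differentiable $\funspace$-isotopies, as in~\eqref{eq:dist_prime}) coincides with a ``geodesic'' in the length-space sense (a minimizer among all rectifiable $d'$-paths). This needs the local comparison of $F$ with a background norm together with an approximation argument showing that rectifiable $d'$-paths can be approximated by admissible differentiable isotopies without increasing length in the limit --- precisely the differential-regularity point the paper sets aside. In practice I would therefore state and prove the existence of minimizers at the level of the length space $(\funspace,d')$ and flag this identification rather than grind through it.
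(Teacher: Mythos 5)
Your proposal is correct and follows essentially the same route as the paper, which states this result as a direct instance of the metric (length-space) Hopf--Rinow theorem of \cite{Bridson1999} and gives no further proof. Your extra bookkeeping---verifying that $d'$ is the intrinsic length metric induced by the Finsler structure $F$, and flagging the identification of the $d'$-topology with the manifold topology and of Finsler geodesics with length-space geodesics---is precisely the differential-regularity discussion the paper explicitly declares out of scope.
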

\subsection{The case of a poly articulated system with rigid bodies}\label{sec:poly}
\textbf{Single rigid body.} When the system is a solid $B$ freely moving the diffeomorphisms are exactly the isometries and we can identify $\funspace$ with the space $SE(3)$ of solid placements. Given $[R,p]$ in $SE(3)$ where $R$ is a rotation matrix and $p$ a vector of $\ambient$, as well as $[\dot{R},\dot{p}]$ in $\tanb_{[R,p]}SE(3)$, we define the related diffeomorphism and its tangent element as $q(x) = Rx + p$ and $\qdot(x) = \dot{R}x + \dot{p}$.
\begin{lemma}\label{lm:simpF}
  Let $[\omega]_\times=R^T\dot{R}$ the skew-symmetric form of $\omega$ and $u=R^T\dot{p}$, where $\omega$ and $u$ are respectively the instantaneous rotation axis and the instantaneous linear velocity of the local frame.
  We have
  \begin{align}
F(q,\qdot)&=G_B(\omega,u)\label{eq:solid_op}\\
&=\frac{1}{2}\int_{\partial B} |\operatorname{det}(\omega,x,n(x))+<u,n(x)>|dS(x),
  \label{eq:chvar}
\end{align}
where $n(x)$ is the normal to $\partial B$ in $x$, and $dS(x)$ is the
element of surface on $\partial B$ at the same point.
\end{lemma}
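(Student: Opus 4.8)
The plan is to start from the absolute-value form of the local operator obtained in the proof of Theorem~\ref{thm:norm_diff}, namely Eq.~\eqref{eq:avecvabs}, $F(q,\dot{q}) = \frac{1}{2}\int_{q(\partial B)} |\langle v_{q,\dot{q}}(y), n_q(y)\rangle|\, dA(y)$, and to push it back from $q(\partial B)$ to $\partial B$ through the change of variables $y = q(x) = Rx + p$. Because $q$ is an isometry with positive-determinant linear part, this substitution is particularly clean: the surface measure is preserved, $dA(q(x)) = dS(x)$, and the outward unit normal is rotated, $n_q(q(x)) = R\, n(x)$.

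Next I would compute the Eulerian velocity field in these coordinates. From $q^{-1}(y) = R^T(y-p)$ we get $v_{q,\dot{q}}(y) = \dot{q}\circ q^{-1}(y) = \dot{R}R^T(y-p) + \dot{p}$, hence $v_{q,\dot{q}}(q(x)) = \dot{R}R^T(Rx) + \dot{p} = \dot{R}x + \dot{p}$. Writing $\dot{R} = R[\omega]_\times$, equivalently $[\omega]_\times = R^T\dot{R}$ (which is skew-symmetric: differentiating $R^TR = I$ gives $R^T\dot{R} + \dot{R}^TR = 0$), and $\dot{p} = Ru$, orthogonality of $R$ yields
\begin{equation*}
\langle v_{q,\dot{q}}(q(x)), n_q(q(x))\rangle = \langle R[\omega]_\times x + Ru,\, Rn(x)\rangle = \langle [\omega]_\times x, n(x)\rangle + \langle u, n(x)\rangle .
\end{equation*}

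The last ingredient is the elementary identity $[\omega]_\times x = \omega \times x$ together with the scalar triple product formula $\langle \omega \times x, n(x)\rangle = \det(\omega, x, n(x))$. Substituting into \eqref{eq:avecvabs} gives exactly
\begin{equation*}
F(q,\dot{q}) = \frac{1}{2}\int_{\partial B} |\det(\omega, x, n(x)) + \langle u, n(x)\rangle|\, dS(x) = G_B(\omega,u),
\end{equation*}
which is the claim. There is no genuine obstacle here — the result is a direct computation — and the only points that deserve care are the transformation rules for the normal and the surface element under the isometry $q$ (both immediate since $q$ has orthogonal linear part and unit Jacobian) and the recognition of the triple product. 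I would also note in passing that this closed form exhibits $G_B$ as a manifestly positively homogeneous, sub-additive function of $(\omega,u) \in \tanb_{[R,p]}SE(3)$, consistent with Theorem~\ref{thm:norm_diff}.
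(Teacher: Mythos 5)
Your proof is correct and follows essentially the same route as the paper: start from the absolute-value form in Eq.~\eqref{eq:avecvabs}, pull the integral back from $q(\partial B)$ to $\partial B$ via the isometry $y = Rx + p$ (unit Jacobian, rotated normal), then use the adjoint $R^T$ and the triple-product identity $\langle [\omega]_\times x, n\rangle = \det(\omega, x, n)$. You spell out a few more intermediate steps than the paper does (the explicit formula for $v_{q,\dot{q}}$ and the skew-symmetry of $R^T\dot{R}$), but the argument is the same.
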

\begin{proof}
We start from the formulation of $F$ in Eq. \eqref{eq:avecvabs}. Since $q(\partial B)$ is from $\partial B$ by the rigid transformation defined by $R$ and $p$, we have $n_q(y)=Rn(q^{-1}(y))$. Recall that $v_{q,\qdot} = \qdot \circ q^{-1}$. Using the change of variables $x = q^{-1}(y)$ with a Jacobian determinant equal to 1 because of the isochorism, we have:
\begin{equation}
F(q,\dot{q})=\frac{1}{2}\int_{\partial B} |<(\dot{R}x+\dot{t}),
Rn(x)>|dS(x).
\end{equation}
Using the adjoint $R^T$ of $R$ and the relation $<[\omega]_\times x,n> = \newline <\omega \times x,n> = \operatorname{det}(\omega,x,n)$ concludes the proof.
\end{proof}
In Eq. \eqref{eq:solid_op}, $(\omega, u)$ is a twist, an element of the spatial algebra as detailed by Featherstone in \cite{featherstone}. The integral $G_B$ is calculated over the fixed reference body $B$ which allows us with a simple, if tedious, calculation to find a closed form of $G_B$ for a polytope body $B$. See Appendix \ref{app:cf} for details.

It is important to note that $F$ depends on $q$ and $\qdot$ only through the twist $(\omega, u)$ which comes from $[[\omega]_{\times}, u] = [R,p]^{-1}[\dot{R},\dot{p}]$ which is the Left lie algebra representation of $[\dot{R},\dot{p}]$ as detailed in \cite{Zefran1998}. This means that the local swept volume operator $F$ is left invariant, which in turn implies that the local swept volume does not depend on the position and orientation of the system but only on its velocity. The left invariance of $F$ allows us to use Theorems \ref{thm:norm_diff} and \ref{thm:hopfrinow} in the following corollary:
\begin{corollary}
If $B$ has no continuous symmetry group, the distance $d'$ is defined on $SE(3)$ and any pair of configurations can be joined by a geodesic.
\end{corollary}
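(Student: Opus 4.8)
The plan is to verify the hypotheses of Theorems~\ref{thm:norm_diff} and~\ref{thm:hopfrinow} for the choice $\funspace=SE(3)$ and then simply read off the conclusion. Following Lemma~\ref{lm:simpF}, identify a tangent vector $\qdot$ at $q=[R,p]$ with its twist $(\omega,u)$, so that $F(q,\qdot)=G_B(\omega,u)$ depends only on the twist; in particular $F$ is left invariant, and $F_q$ is a norm on $\tanb_q SE(3)$ for every $q$ as soon as $G_B$ is a norm on the six-dimensional twist space $\RR^6$. So the whole problem reduces to a statement about $G_B$ on $\RR^6$ plus the three topological conditions of Hopf--Rinow for $(SE(3),d')$.

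First I would check \textbf{(H2)}: for a rigid $q=[R,p]$ and twist $(\omega,u)$ the static field $v_{q,\qdot}(y)=\dot R R^{T}(y-p)+\dot p$ is an affine (Killing) field of $\ambient$, whose flow is the one-parameter family of screw motions generated by $(\omega,u)$; this family lies in $SE(3)=\funspace$ for all $t$, so $\gamma^{q,\qdot}_t$ is defined and stays in $\funspace$ globally, and \textbf{(H2)} holds. Next, \textbf{(H1)}: two isometries with the same image body differ by an orientation-preserving isometry fixing $B$, so $\emb$ is injective once the symmetry group of $B$ is trivial (if it is merely discrete one may pass to the quotient of $SE(3)$ by this finite group, onto which $\emb$ descends to a bijection and $F$, being left invariant, still descends). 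In any case the point that really uses the hypothesis is the definiteness step in the proof of Theorem~\ref{thm:norm_diff}: if $G_B(\omega,u)=0$ then by Eq.~\eqref{eq:chvar} the screw field $x\mapsto\omega\times x+u$ is everywhere tangent to the compact boundaryless surface $\partial B$, so its flow --- the screw motion above --- preserves $\partial B$, hence $B$, for all $t$; if $(\omega,u)\neq 0$ this is a non-constant one-parameter subgroup of $SE(3)$ fixing $B$, i.e.\ a continuous symmetry group of $B$, contradicting the hypothesis. Hence $(\omega,u)=0$, and together with the homogeneity and subadditivity that are immediate from the $\tfrac12\int|\cdot|$ form of $G_B$ this shows every $F_q$ is a norm, so $F$ is a reversible left-invariant Finsler metric on $SE(3)$ and $d'$ is a genuine distance.

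It then remains to verify the hypotheses of Theorem~\ref{thm:hopfrinow} for $(SE(3),d')$. Path-connectedness is clear since $SE(3)\cong SO(3)\times\RR^{3}$ is connected. For the rest I would use that any two norms on $\RR^{6}$ are equivalent: there are $0<c_1\le c_2$ with $c_1\|(\omega,u)\|\le G_B(\omega,u)\le c_2\|(\omega,u)\|$ for the Euclidean norm $\|\cdot\|$. Transporting $\|\cdot\|$ by left translations gives a left-invariant Riemannian metric on $SE(3)$ with distance $d_0$ satisfying $c_1 d_0\le d'\le c_2 d_0$. In particular $d'$ induces the same topology as $d_0$, namely the manifold topology, so $(SE(3),d')$ is locally compact; and $d_0$ is complete, since a left-invariant Riemannian metric makes $SE(3)$ a homogeneous --- hence geodesically complete --- Riemannian manifold, so any $d'$-Cauchy sequence is $d_0$-Cauchy, $d_0$-converges, and therefore $d'$-converges. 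Applying Theorem~\ref{thm:hopfrinow} yields a $d'$-geodesic between every pair of configurations, which is the claim.

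The only genuinely non-routine point is the completeness of $d'$; the norm-equivalence bound is the device that reduces it to the standard completeness of left-invariant (equivalently, homogeneous) Riemannian metrics, and the same two-sided bound is what lets one invoke local compactness and pin down the topology before applying Hopf--Rinow. Everything else --- \textbf{(H2)}, the computation that $G_B$ is a norm, path-connectedness --- is straightforward given Lemma~\ref{lm:simpF} and Theorem~\ref{thm:norm_diff}.
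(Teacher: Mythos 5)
Your proposal is correct and follows essentially the same route as the paper: verify \textbf{(H2)} via the one-parameter screw motions staying in $SE(3)$, verify \textbf{(H1)} via the symmetry hypothesis, and then check local compactness, completeness and path-connectedness of $(SE(3),d')$ before invoking Theorem~\ref{thm:hopfrinow}. You are in fact somewhat more careful than the paper on two points --- you note that a merely discrete symmetry group already breaks injectivity of $\emb$ (the paper conflates ``no symmetry group'' with ``no continuous symmetry group'') and you make the completeness explicit via norm equivalence with a left-invariant Riemannian metric, where the paper simply cites the general fact that a left-invariant norm yields a complete topology.
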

\begin{proof}
For \textbf{(H1)} to hold with $\funspace=SE(3)$, it suffices by definition that the body $B$ does not have any continuous symmetry group. The symmetries of a compact are the group of solid transformations by which the image of the compact is the compact itself. If the functional space contains such a group, $\mathcal{E}$ is not injective. On the other hand, if $B$ has no symmetry group, $\mathcal{E}$ is necessarily injective on $SE(3)$.

Since $SE(3)$ is a Lie group, the hypothesis \textbf{(H2)} is always verified. On a Lie group, the path induced by the static velocity field are exactly the exponential map of the group \cite{Chirikjian2009}. If the configuration space is the whole group $SE(3)$, the exponential map output always belongs to $SE(3)$ and \textbf{(H2)} is verified.
$F$ is left invariant \cite{Lang1999} and $SE(3)$ equipped with a norm is always locally compact and path connected. It is known that if a norm on a group is left invariant then it defines a complete topology \cite{Lang1999}. Using the Hopf-Rinow theorem \ref{thm:hopfrinow} concludes the proof.
\end{proof}
\textbf{Multiple rigid bodies.} When $B$ is the disjoint union of its $N$ rigid links $B_i$ we can associate the configuration space to a subset of $SE(3)^N$. If we assume first that the configuration space does not contain configuration for which the system is self colliding, $\mathcal{C}$ will be made of isochoric diffeomorphisms and the local swept volume is the sum of the volumes swept by each link:
\begin{align}
    F(q,\qdot) = \sum_{i=1}^N G_{B_i}\left(J_i(q)\qdot\right),\label{eq:fin}
\end{align}
where $J_i(q)$ is the Jacobian of the forward kinematic for the link $i$. More precisely, $J_i(q)\qdot$ is the vector of the twists which represents the instantaneous velocity of the link $i$ when the system is in configuration $q$ and varies according to $\qdot$, see \cite{featherstone} for details.
Analytic derivatives of the forward kinematic Jacobian are well known \cite{Carpentier2021} and we can easily compute the local operator $F$ for a poly-articulated system.
Moreover, note that if we interpret the forward kinematic as an immersion of the configuration space in $SE(3)^N$, $F$ is nothing else than the pullback of the norm on the Cartesian product of $N$ copies of $SE(3)$ respectively equipped with the norms $G_{B_i}$. The previous reasoning still holds even when the poly-articulated system may intersect itself, the only difference being that a collision region will be counted once for each link involved in the collision.
\begin{corollary}
For a poly-articulated system with $N$ rigid links such that none of the $B_i$ possesses a continuous symmetry group and the Jacobian of the forward kinematic always has full rank, the distance $d'$ is well defined on the configuration space and any pair of configurations can be joined by a geodesic.
\end{corollary}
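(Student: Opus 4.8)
The plan is to reduce this corollary to the previous one (the single-rigid-body case) together with Theorems~\ref{thm:norm_diff} and~\ref{thm:hopfrinow}, by viewing the poly-articulated configuration space as a submanifold of $SE(3)^N$ via the forward kinematics and checking that the two hypotheses \textbf{(H1)} and \textbf{(H2)} hold. First I would verify \textbf{(H1)}: the embedding $\mathcal{E}$ sends a configuration $q$ to the regular body $\bigcup_{i=1}^N (f_i(q))(B_i)$, where $f_i$ is the forward kinematics of link $i$. If $\mathcal{E}(q)=\mathcal{E}(q')$ as regular bodies, then, because no $B_i$ has a continuous symmetry group, each individual link placement is pinned down — the only ambiguity in recovering a rigid placement from the image of a body is its symmetry group, which is now discrete — and since the forward-kinematics Jacobian has full rank everywhere, the map $q\mapsto (f_1(q),\dots,f_N(q))$ is an immersion, hence locally injective; combining local injectivity with the per-link rigidity argument gives injectivity of $\mathcal{E}$. (Here one uses the full-rank hypothesis to pass from "each link placement is determined" to "$q$ is determined.")

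Next I would verify \textbf{(H2)}. Given $q$ and $\dot q\in\tanb_q\funspace$, the static velocity field $v_{q,\dot q}$ on $\ambient$ restricts on each link to the rigid velocity field of a twist $J_i(q)\dot q$, so its flow acts on link $i$ as a one-parameter subgroup of $SE(3)$, i.e. $t\mapsto \exp(t\,[J_i(q)\dot q])\cdot f_i(q)$. The path $\gamma^{q,\dot q}$ is therefore the image under forward kinematics of a smooth curve in $SE(3)^N$; because the forward kinematics is a full-rank immersion, near $t=0$ this curve lifts back to a smooth curve in the configuration manifold through $q$ with initial velocity $\dot q$, which is exactly $\gamma^{q,\dot q}_t$, so it lies in $\funspace$ for $t$ in a neighbourhood of $0$. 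This establishes \textbf{(H2)}.

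With \textbf{(H1)} and \textbf{(H2)} in hand, Theorem~\ref{thm:norm_diff} applies and $F_q$, which by Eq.~\eqref{eq:fin} is the pullback through the immersion $q\mapsto(f_1(q),\dots,f_N(q))$ of the product norm $\bigl(\dot q_1,\dots,\dot q_N\bigr)\mapsto\sum_i G_{B_i}(\dot q_i)$ on $SE(3)^N$, is a genuine norm on each tangent space; hence $F$ is a reversible Finsler structure and $d'$ is a distance. For the existence of geodesics I would invoke Theorem~\ref{thm:hopfrinow}: the configuration space is path connected by hypothesis, and local compactness is inherited since it is a finite-dimensional manifold carrying a Finsler metric. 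The one genuinely delicate point — and I expect it to be the main obstacle — is \emph{completeness} of $(\funspace,d')$: unlike the single-$SE(3)$ case, left-invariance is no longer available, so completeness does not come for free and in fact can fail if the configuration space is, say, an open bounded region of joint space (a manipulator with finite joint limits), where a geodesic can run off to the boundary in finite length. I would therefore either add properness/closedness of the configuration space inside $SE(3)^N$ as a standing assumption, or restrict the geodesic-existence claim to the case where the joint space is complete (e.g. a product of circles and full Euclidean factors), and state completeness explicitly as the remaining hypothesis needed to close the argument.
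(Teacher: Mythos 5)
Your route is genuinely different from the paper's. The paper's proof is a two-line reduction: full-rank Jacobian makes the forward kinematics an immersion of $\confspace$ into $SE(3)^N$; the previous corollary makes each $G_{B_i}$ a complete Finsler geometry on $SE(3)$; and it then invokes the claim that \emph{the pullback by an immersion of a Cartesian product of complete Finsler geometries is a complete Finsler geometry}, which delivers both the norm property and Hopf--Rinow in one stroke. You instead re-verify \textbf{(H1)} and \textbf{(H2)} from scratch for the articulated system and then apply Theorem~\ref{thm:norm_diff} and Theorem~\ref{thm:hopfrinow} directly. Your \textbf{(H2)} argument (the flow of the static field acts linkwise as a one-parameter subgroup of $SE(3)$, and the full-rank immersion lets you lift the resulting curve back to the configuration manifold near $t=0$) is a correct and more explicit version of what the paper leaves implicit. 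Your \textbf{(H1)} argument has a small soft spot the paper shares: equality of the unions $\bigcup_i (f_i(q))(B_i)=\bigcup_i(f_i(q'))(B_i)$ does not by itself pin down each link's placement, and ``no continuous symmetry group'' still permits a discrete one, so what you really get is local injectivity --- which is in fact all that the proof of Theorem~\ref{thm:norm_diff} uses, since there $\gamma^{q,\qdot}_t$ stays near $q$.

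Your flag on completeness is not excessive caution; it identifies a real weakness in the paper's own argument. The pullback of a complete Finsler metric by an immersion is \emph{not} complete in general (an open bounded interval immersed in $\RR$ already fails), so the lemma the paper cites needs the image to be properly or closedly immersed, or the configuration space to be compact (e.g.\ a torus of revolute joints), exactly as you say. Your proposal to state completeness (or properness of the immersion) as an explicit standing hypothesis is the honest fix; the paper silently assumes it. Net: your proof is correct modulo that added hypothesis, takes a more elementary and more transparent route, and what it buys is precisely the visibility of the completeness assumption that the paper's compressed argument hides.
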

\begin{proof}
If the Jacobian has full rank, the forward kinematic is an immersion of $\mathcal{C}$ in $SE(3)^N$. If none of the $B_i$  possesses a continuous symmetry group, each $G_{B_i}$ defines a complete Finslerian geometry on $SE(3)$. Using that the pullback by an immersion of the Cartesian product of a complete Finslerian geometry is a complete Finslerian geometry \cite{Bridson1999} concludes the proof.
\end{proof}

\section{Experiments}\label{sec:exp}
\begin{figure*}[htp]
\setlength\tabcolsep{1.2pt} 
\renewcommand{\arraystretch}{0.2} 
\begin{tabular}[b]{p{.01\textwidth}p{.18\textwidth}p{.01\textwidth}p{.18\textwidth}p{.01\textwidth}p{.18\textwidth}p{.01\textwidth}p{.18\textwidth}p{.01\textwidth}p{.18\textwidth}}
&\IncG[width=.17\textwidth,height=.17\textwidth]{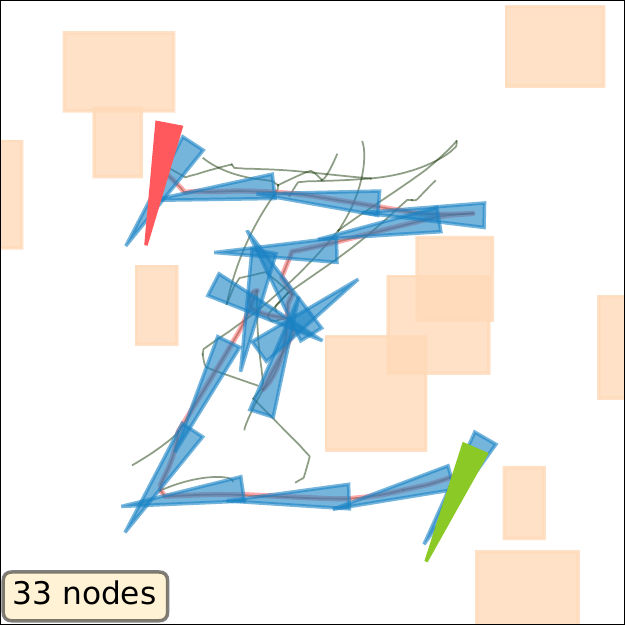}&
&\IncG[width=.17\textwidth,height=.17\textwidth]{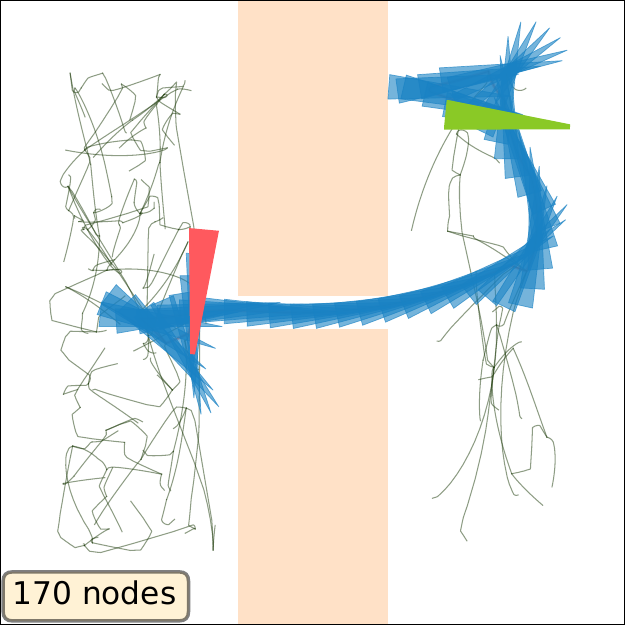}&
&\IncG[width=.17\textwidth,height=.17\textwidth]{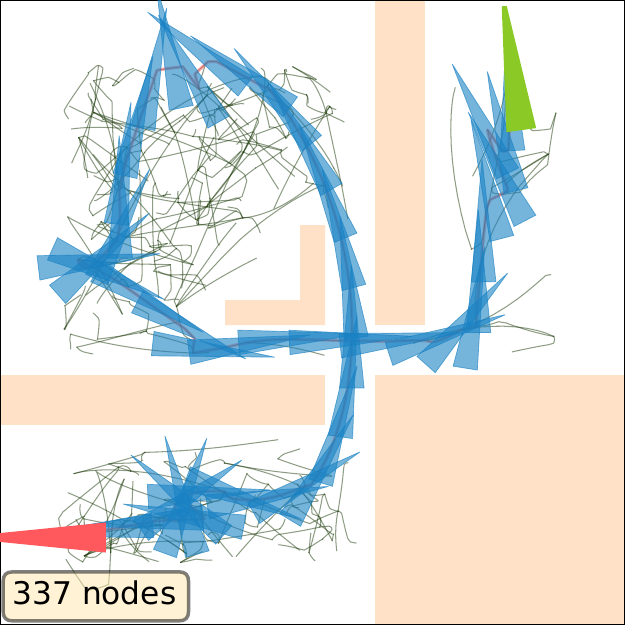}&
&\IncG[width=.17\textwidth,height=.17\textwidth]{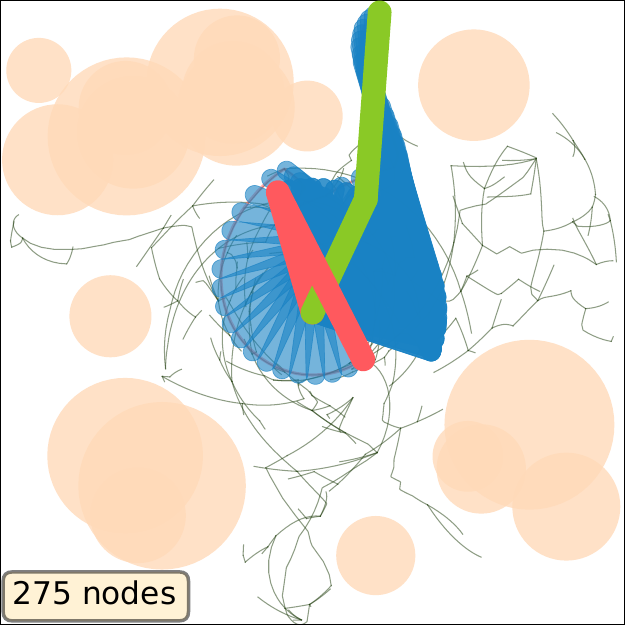}&
&\IncG[width=.17\textwidth,height=.17\textwidth]{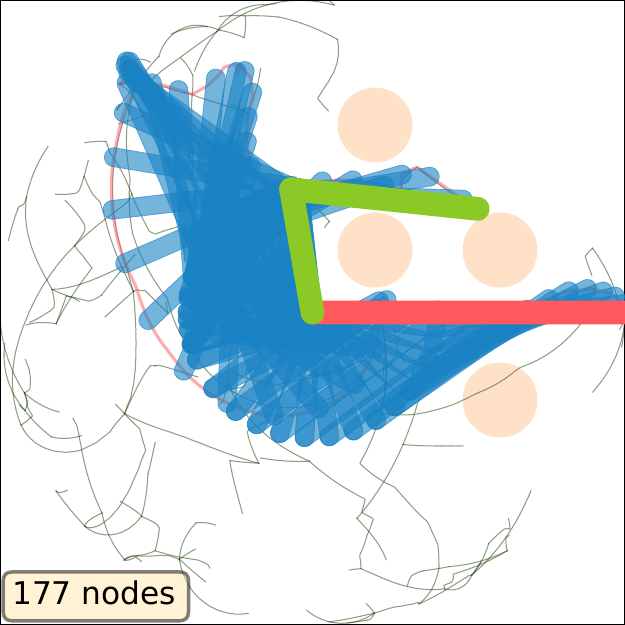}\\
\vspace{38.pt}a.&\IncG[width=.17\textwidth,height=.17\textwidth]{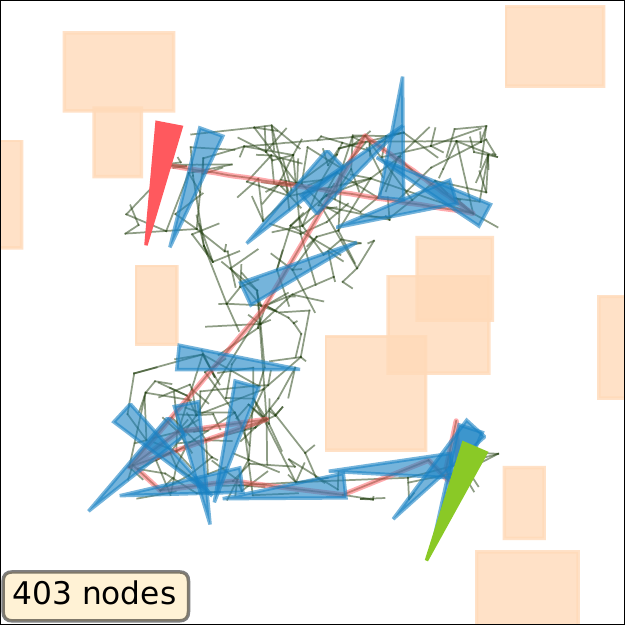}&
\vspace{38.pt}b.&\IncG[width=.17\textwidth,height=.17\textwidth]{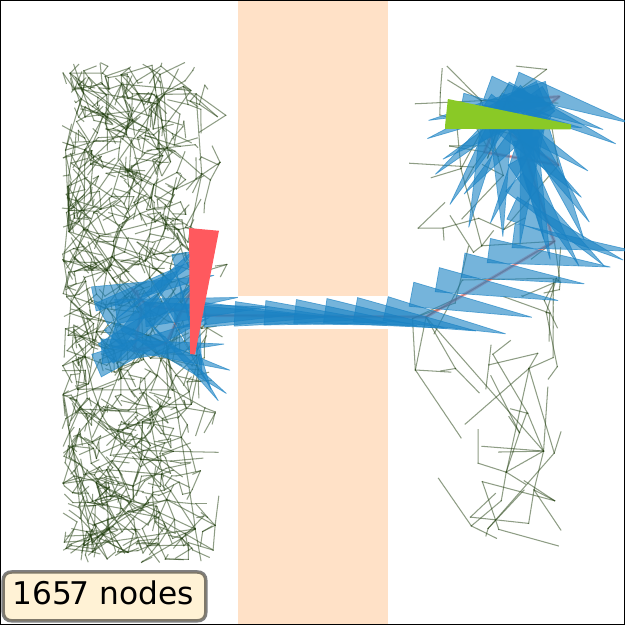}&
\vspace{38.pt}c.&\IncG[width=.17\textwidth,height=.17\textwidth]{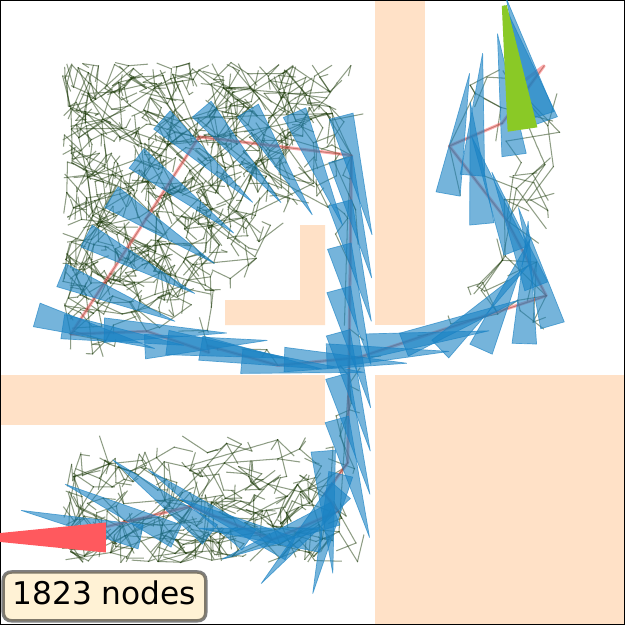}&
\vspace{38.pt}d.&\IncG[width=.17\textwidth,height=.17\textwidth]{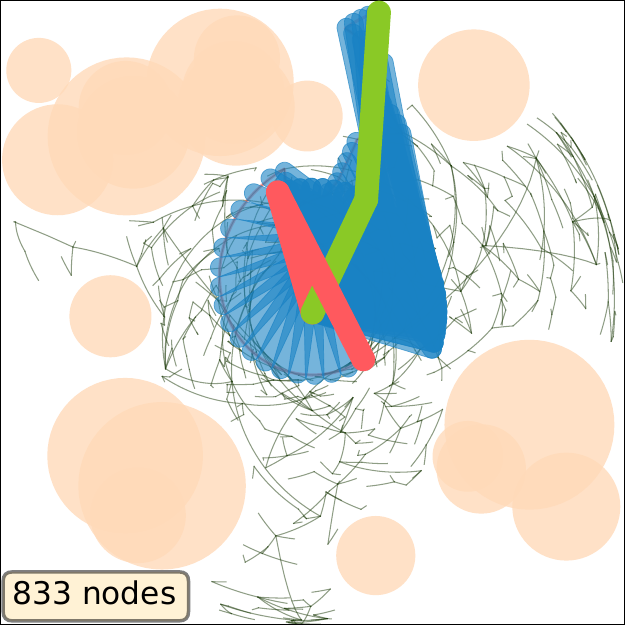}&
\vspace{38.pt}e.&\IncG[width=.17\textwidth,height=.17\textwidth]{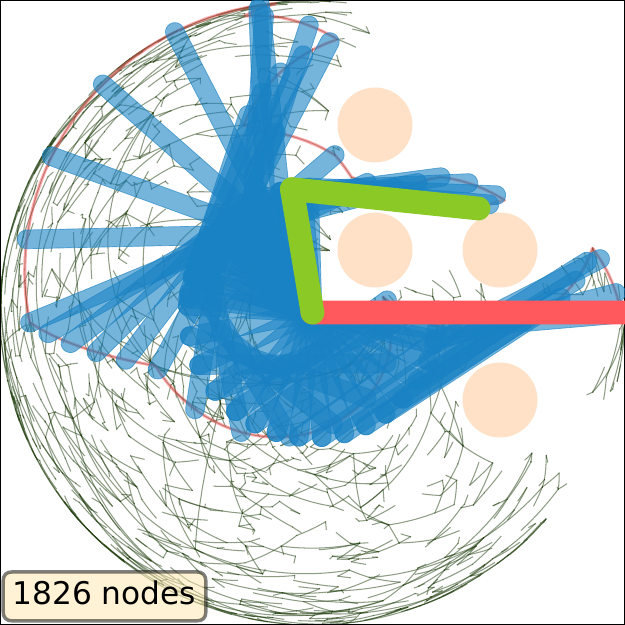}\\
\end{tabular}
  \caption{\footnotesize Different motion planning problems solved with RRT using the \MSV metric structure (top row) and the canonical Riemannian one (bottom row). From left to right: a. some random boxes, b. a narrow passage, c. a crossroad, d. a random cluttered environment and e. a narrow passage. The shape in red (resp green) is the starting (resp goal) configuration. In blue some snapshots of the final trajectory. We plot in dark green the position of the end effector for all the configurations in the tree.}
  \label{fig:motionplanning}
\vspace{-.5em}
\end{figure*}
\subsection{Technical details}
To illustrate our construction we restrict ourselves to the case of a 2D physical space with 2D solids in motion and planar poly-articulated arms.
The previous section provides a method for the computation of the distance and its geodesics in these cases.
First, we implement $G_B$ in the automatic differentiation framework CasADi \cite{Andersson2019}. Second, we use CasADi in conjunction with the rigid body library Pinocchio \cite{Carpentier2018} to obtain a differentiable implementation of the forward kinematic Jacobian and in turn a differentiable implementation of $F$ when using a smooth approximation of the absolute value as in \cite{Voronin2014}. In Finslerian geometry \cite{homogeneousfinsler}, geodesics are exactly the stationary trajectories for the energy functional associated with $F$ \cite{Lang1999}. Now we translate the infinite dimensional minimization problem into a finite dimensional one using $M$ time steps. For a pair of configurations $(a,b)$ in $\mathcal{C}^2$, we now call $q=(q_i)_{i=0..M}$ an array of $M+1$ elements of the configuration space $\mathcal{C}$ joining $a=q_0$ and $b=q_M$ as well as $w=(w_i)_{i=1..M}$ an array of $M$ elements of the configuration space lie algebra $\mathfrak{c}$ which represent the $M$ tangential vectors that maps $q_i$ to $q_{i+1}$ through the group exponential map \cite{Chirikjian2009}. We solve the multiple shooting problem:
\begin{align}
\begin{split}
&\min_{w \in \mathfrak{c}^{M}, q \in \mathcal{C}^{M+1}} \sum_{k=0}^{M-1} F(q_k, w_k)^2\\
&q_{k+1}=\operatorname{Exp}_{q_{k}}(w_k),\: k=0..M-1 ;\quad q_0=a, q_M=b,
\end{split}\label{eq:EM}
\end{align}
where $\operatorname{Exp}_{q}$ is the exponential map of the configuration space Lie group in configuration $q$ implemented through the Pinocchio library \cite{Carpentier2019}. We solve this minimization program using an off the shelf optimizer based on the interior point method \cite{Mao2018} because we can construct an initial guess that respects the constraints.
As this problem is not convex in general, we use multiple initial guesses which are constructed with the Riemannian geodesic and a random perturbation.
To compute $d'(a,b)$ it is sufficient to compute the sum of the terms $F$ taken on the steps of path solution to the problem \eqref{eq:EM}. 

\textbf{Note on complexity.}
We could use a Riemannian approximation of $F$ given by $g = \partial_{\qdot\qdot}F^2$ to obtain differential equations characterizing the geodesics \cite{Guigui2021}, but it would still require solving a bounded value problem with the same order of complexity.
The problem we solve is in general non-convex, and a global convergence study is out of the scope of this paper.
The overall complexity of the geodesics' computation is of the same order as that of trajectory optimization algorithms such as dynamic differential programming \cite{Schulman2014, Jallet21} but without the dynamical constraints because in our case the control variable is exactly the velocity. In motion planning, we use the geodesics as the paths associated to the tree edges \cite{Lav06} and the distance for the nearest neighbor searches. Thus, when we use the new metric, the computational complexity of the planning is of the same order as that of methods combining RRT and trajectory optimization such as \cite{Kamat2022}. However, in our case the cost of the trajectory optimization  problem is a distance, and we can combine the use of efficient spatial trees structure such as KD tree \cite{Bentley1975} with the optimization  procedure for every new node instead of a two stages approach as in \cite{Kamat2022}. Finally, even if dealing with obstacles is the main motivation behind the choice of the \MSVD as a metric structure, the distance and its geodesics construction do not depend on any specific environment choice. For a given system, we can first compute the solution of the optimization procedure for a large exploration of the free space and then use this computation to construct initial guesses for the geodesics and estimators for the distances when we explore specific environments. On the other hand, most trajectory optimization methods use the knowledge of the environment in the optimization procedure and no such pre-computation can be done.

In the remaining part of this section, we give a proof of concept of the \MSVD advantages over the canonical Riemannian distance by visualizing the geodesics of different systems and by observing the gain in terms of nodes needed in RRT. In the context of rigid body dynamic, the geodesic of the canonical Riemannian metric structure is the linear or spherical linear interpolation of each degree of freedom.
\subsection{Geodesics}
We compute the geodesics of the minimum swept volume distance, and we compare them with the canonical Riemannian geodesics for different systems and different pairs of configurations. The trajectories obtained are illustrated in Figure \ref{fig:geodesic}.
We observe that the geodesics can have quite varied behaviors, but we often observe some characteristic sub paths of the motion that seems to be steady in some sense. This is an instance of the \textit{turnpike} property extensively studied by Trelat \cite{Trelat2015} in optimal control; the turnpike property of an optimal control problem is the fact that optimal trajectories decompose into transient phases and a steady state for most of the motion. A steady state is a solution of the static optimization  problem associated to an optimal control problem. In the context of rigid body motion and minimal swept volume metric structure, it ma correspond to regular motions such as pure translation or pure rotation. For the rod and the arrowhead shape the turnpike is a translation in the direction perpendicular to the shorter edge. For the  bracket shape it is a rotation with a center such that the edges are tangential to the circles induced by the rotation. For the arms the turnpike is the rotation of the base link when the remaining links are orthogonal to the base link.

\subsection{Motion planning}
Since the optimization problem \ref{eq:EM} is in general non-convex, it is important to check that the empirical solutions verify the axioms of a distance. To validate the use of the \MSVD in RRT  \cite{Lavalle00RRT} we verify the triangular inequality on 100000 triplets of random configurations. The accuracies are $97.1\%$ for the rod, $98.2\%$ for the arrowhead shape, $96.7\%$ for the  bracket shape, $91.2\%$ for the double pendulum and $87.2\%$ for the triple pendulum.

We compare the usage of the \MSV metric structure to the canonical Riemannian one in RRT when solving the motion planning problems illustrated in Figure \ref{fig:motionplanning}. We solve the problem of an arrowhead shape moving in a randomly generated set of boxes, a narrow passage, and a crossroad. We also solve the motion of a two-link arm moving in a randomly generated cluttered environment and in a set of tightened obstacles. For each scenario we select a pair of configurations and we run RRT until they are connected. We can observe in Figure \ref{fig:motionplanning} how RRT needs far fewer nodes to solve the problem when using the \MSV metric structure instead of the canonical Riemannian one.

For the case of the narrow passage Fig. \ref{fig:motionplanning} b., we plot in Figure \ref{fig:perf} the average number of nodes needed to solve the problem in function of the tunnel width. The advantage becomes more marked as the passage narrows. Let us consider two configurations of the arrowhead shape on either side of the narrow passage with their centers aligned with the tunnel. Based on the geodesics observed in Figure \ref{fig:geodesic}, the Riemannian geodesics will be this pure translation if in addition the orientation of the configurations are the same and aligned with the tunnel direction. On the other hand, the turnpike property of the \MSV geodesics will naturally produce this translation whatever the orientation of the configurations. Thus, the narrowness of the tunnel affects the requirements in position but also in terms of orientation in the Riemannian case. This explains why the advantage of the \MSV metric structure is even more important when the passage gets narrower.
\begin{figure}
  \centerline{
  \includegraphics[width=0.4\textwidth]{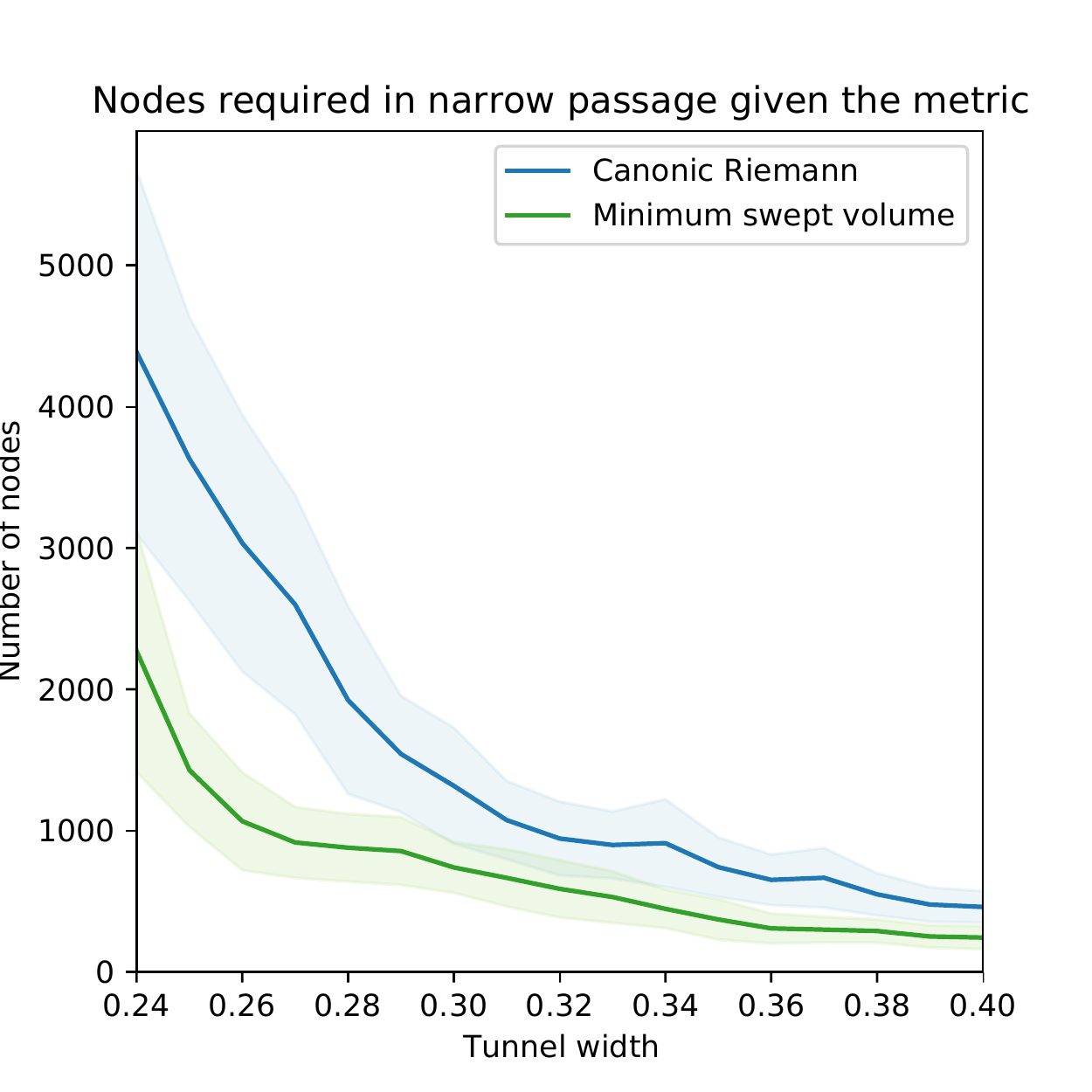}
  }
  \caption{Average number of nodes required to solve a narrow passage problem using RRT with the \MSV metric structure or the canonical Riemannian one over 50 random seeds.}
  \label{fig:perf}
\vspace{-1em}
\end{figure}

\section{Conclusion}\label{sec:ccl}
This paper lays the theoretical foundations of the \MSV metric structure on the configuration space. Our construction allows us to formulate the swept volume as an integral of local contributions so tools from Finslerian geometry ensure the existence of the geodesics and provide a computation procedure.

Our metric structure addresses some issues raised in the introduction because it is unique and does not rely on any parametrization choice. Our experiments do suggest it is appropriate for exploration in the presence of obstacles.
These two points makes the strength of our construction. The fact that it is useful for dense environment while being agnostic to the environment is a powerful feature.

There are several avenues for further work. First, we plan to implement a dedicated geodesic solver in C++ in a general 3D setting. We will then properly benchmark the total computational time of the approach. Second, the \textit{turnpike property} framework \cite{Trelat2015} seems to be a powerful tool to study the convergence of algorithms using the \MSV metric structure.
On the other hand, the Riemannian approximation $g = \partial_{\qdot, \qdot}F^2$ may also lead to other applications: it can be used to characterize the geodesics by differential equations, but it is also a simple way to create a new sampling method for the configuration space. We can indeed sample the configuration space following a law proportional to $\operatorname{det}(\partial_{\qdot,\qdot}F^2)$ the volume element of our geometry. This will favor tricky regions where displacements sweep more volume.

Geometric structures may have properties useful in robotic applications and we think that the formalism behind the \MSVD is a step in this direction.

\section*{APPENDIX}
\subsection{Closed form for a solid in motion}\label{app:cf}
The following calculation apply for a generic polyhedral body in a 3D physical space. Let us consider a polyhedral reference body $B$ with $I$ facets and each facet $f_i$ is a collection of $J_i$ edges represented by the pair of points $x_{i,j}^s,x_{i,j}^e$. We call $A_i$ the area of facet $i$ and $n_i$ the normal vector to the facet.

We have $G_B(\omega, u) = \frac{1}{2}\sum_{i=1}^I g_i(\omega,u)$ with:
\begin{equation*}
     g_i(\omega,u) = \begin{cases}
      A_i|<n_i,u>| & \omega \propto n_i\\
      \frac{1}{6}\sum_{k=1}^{J_i}\frac{[l_{i,j}^e|l_{i,j}^e| + l_{i,j}^s|l_{i,j}^s| + \chi(l_{i,j}^e, l_{i,j}^s)](z^s_{i,j} - z^e_{i,j})}{\|\omega\|^2 - <\omega,n_i>^2} & \omega \cancel{\propto} n_i,
    \end{cases}
\end{equation*}
where $l_{i,j}^\alpha = \operatorname{det}(\omega,x_{i,j}^\alpha,n_i) + <u,n_i>$ and $z_{i,j}^\alpha = <\omega,x_{i,j}^\alpha> - <\omega,n_i><x_{i,j}^\alpha,n_i>$ where $\chi(x,y) = xy\frac{|x| - |y|}{x - y}\text{ , }\chi(x,x) = x|x|$.

\begin{proof}
We decompose the integral into a sum of the integrals on each facet and we have $g_i(\omega, u) =\int_{f_i}|\operatorname{det}(\omega,x,n_i)+<u,n_i>|dS(x)$. If there is a real $\lambda$ such that $\omega = \lambda n_i$, the integrand does not depend on $x$ and we have the result.
Otherwise let us construct the direct orthonormal basis $(u_z, u_l, n_i)$ with $u_z = \frac{\omega-<\omega, n_i>n_i}{\sqrt{\|\omega\|^2-<\omega, n_i>^2}}$,
such that he plan span by $(u_z, u_l)$ contain  the facet and we have $dS = dl dz$ where $l$ and $z$ are the corresponding coordinate. With this basis we have
\begin{align*}
    det(\omega, x, n_i)= det(\omega-<\omega, n>n_i, x, n_i) =\sqrt{\|\omega\|^2-<\omega, n_i>^2}l.
\end{align*}
Applying the change of variable $l \mapsto \sqrt{\|\omega\|^2-<\omega, n>^2}l + <u,n>$ and $z \mapsto \sqrt{\|\omega\|^2-<\omega, n>^2}$ in the integral we can calculate:
\begin{align*}
    g_i(\omega,u) = \frac{1}{\|\omega\|^2-<\omega, n_i>^2}\int_{\phi(f_i)}|l|dldz,
\end{align*}
where $\phi(f_i)$ is the l-z transform of facet $f_i$, if we note the l-z value of a vertex $l^\alpha_j, z^\alpha_j$ where $\alpha$ stands for $e$ or $s$ we have the proper expression for $l^\alpha_j, z^\alpha_j$ in the frame.

We just need to calculate $|l|dldz$ over the polygon $\phi(F)$ to finish. Using the fact that the edges constitute a closed loop we have by integrating $l$:
\begin{align*}
    \int_{\phi(f_i)}|l|dldz = \frac{1}{2} \sum_{j=1}^L\int_{z^e_{j}}^{z^s_j}l_j(z)|l_j(z)|dz
\end{align*}

Using the affine parametrization of an edge $j$ we have $l_{i,j}(z) =\frac{z-z_{i,j}^s}{z_{i,j}^e-z_{i,j}^s}(l_{i,j}^e-l_{i,j}^s) + l_{i,j}^s$ and use it for a last change of variable and we have:
\begin{align*}
    \int_{\phi(f_i)}|l|dldz &=\frac{1}{6}\sum_{j}\frac{z_{i,j}^e - z_{i,j}^s}{l_{i,j}^e - l_{i,j}^s}(|l^s_{i,j}|^3 - |l^e_{i,j}|^3)\\
    &= \frac{1}{6}\sum_{j}(z_{i,j}^s - z_{i,j}^e)[l_{i,j}^s|l_{i,j}^s| + l_{i,j}^e|l_{i,j}^e| + \chi(l_{i,j}^e, l_{i,j}^s)]    
\end{align*}
Hence the result when we introduce $\chi$ such as $\chi(x,y) = xy\frac{|x| - |y|}{x - y}\text{ , }\chi(x,x) = x|x|$ for a factorization that also work when $l_{i,j}^e = l_{i,j}^s$.
\end{proof}

\section*{Acknowledgement}
This work was partially supported by the HPC resources from GENCI-IDRIS (Grant
011011181R1), the European Regional Development Fund
under the project IMPACT (reg. no. CZ.02.1.01/0.0/0.0/15
003/0000468)", the Inria-NYU collaboration agreement, the Louis Vuitton Louis Vuitton ENS Chair on Artificial Intelligence, and the French government under management
of Agence Nationale de la Recherche as part of the ”Investissements d’avenir” program, reference ANR-19-P3IA0001 (PRAIRIE 3IA Institute).

\bibliographystyle{IEEEtranS}
\bibliography{librairy}

\end{document}